\documentclass[11pt]{article}
\usepackage[margin=1in]{geometry}
\usepackage{amsmath,amssymb,amsthm}
\usepackage{booktabs}
\usepackage{pifont}          % \ding{51}/\ding{55}:related-work 定位表的对勾/叉
\usepackage{graphicx}
\usepackage{hyperref}
\makeatletter
\g@addto@macro\ttfamily{\hyphenchar\font=`\-\relax}
\makeatother
\let\WCorigunderscore\_
\renewcommand{\_}{\WCorigunderscore\allowbreak}
\newtheorem{theorem}{Theorem}

\title{Pricing the Risk of Runtime Compression:
Anytime-Valid Admission and a Served-Output Law for
  Compressed Serving State}
\author{%
  Fanzhe Wei\\ Metask Lab\\ {\small\texttt{whyer1@gmail.com}}
  \and Li Liu\\ Metask Lab\\ {\small\texttt{muriel092611@gmail.com}}}
\date{}

\begin{document}
\maketitle

\begin{abstract}
Runtime compression of serving state trades quality for capacity with
no priced guarantee: systems adapt precision on load signals with no
soundness statement, and certified approaches budget request-level risk
by a union bound over a pre-declared event count. We show the union
budget exhausts on every long request in a production serving stack
($100\%$ of requests), and replace it with an anytime-valid,
physically-accounted admission ledger whose bound holds at every one of
$352{,}333$ admission calls on live traffic and which, in a
pre-registered held-out confirmatory round, halves the exact-fallback
rate at matched risk ($0.30\to0.14$) --- coverage is bought at a price
the account states. We then price the remaining distance from the
certified witness to what a user experiences: a machine-checked design
law ($\mathrm{TV}\le\tanh(a_q w_{\mathrm{thr}})$) turns the served-TV
target into a threshold knob, and a three-layer audit of its
instantiation --- an operator-norm query envelope measured $1.5\times$
from tight, a measured-ellipsoid replacement for the Cauchy--Schwarz
ball that buys nothing ($0.89\times$, held-out sound), and the gate's
operating point ($\sim\!700\times$) --- localizes the entire
$1064\times$ gap to the operating point, a price the law now states
rather than an unknown. A priced bound is still worth nothing on a
request one has not seen, so the third link is the quantifier:
exchangeable extrapolation across $80$ serving histories replaces
binary conformal prediction's vacuous certificates with order-statistic
bounds that discriminate ($0.41$ against $0.51$ calibration risk). All probabilistic kernels are Lean~4-checked ($228$
exported theorems, no \texttt{sorry}); which object deserves this
machinery at all is settled empirically in a companion paper
(\emph{What to Protect When You Quantize a Mixture of Experts}) that
adjudicates --- and rejects --- the natural alternative of certifying
expert routing. What ships is an account: risk you can spend, a gap you
can read off a law, and a bound that survives the request you have
not seen.
\end{abstract}

\begin{figure}[htbp]\centering
\includegraphics[width=\linewidth]{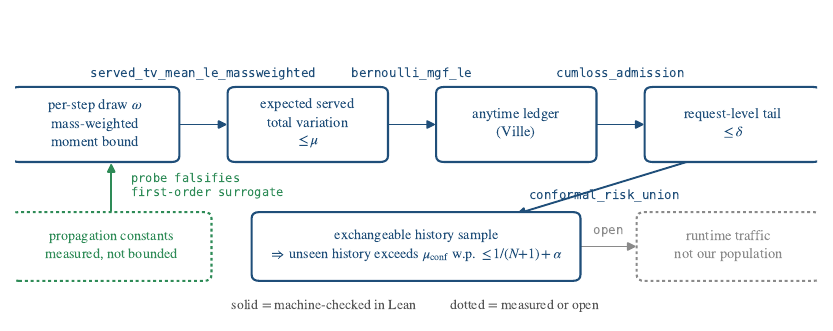}
\caption{The chain this paper machine-checks, and where it stops. Solid
boxes and their connecting theorems are proved in Lean with no
\texttt{sorry}; dotted boxes are measured or open. Reading left to right:
a per-step moment bound on the rounding draw becomes a bound on expected
served-output total variation, Bernoulli domination feeds it to the
anytime ledger, and Ville's inequality closes the request. The lower row
is the quantifier: exchangeable history sampling lifts a fixed-history
bound to an unseen history, and the one box we cannot fill is that our
population is a pool we constructed rather than runtime traffic.
The propagation constants that a tensorized proof would need are measured
and \emph{falsified}, not bounded --- which is why the top row is stated
for an arbitrary step perturbation.}
\label{fig:chain}
\end{figure}

\section{Introduction}
Quantization for large-model serving is decided at the wrong
timescale. Post-training quantization chooses a \emph{static}
configuration offline and defends it with average-case evidence;
serving systems increasingly change precision \emph{at runtime} (layer
swapping under memory pressure, learned bit-plane routing, phase-split
precision) with no soundness statement. The structural gap: no system
decides precision from the \emph{content of the current request} while
providing a sound, per-request accounting of the risk that decision
incurs.

For MoE models the gap is sharper: quantization error also flips
discrete routing decisions (on Mixtral-8$\times$7B at extreme
compression, expert selection changes on over $40\%$ of tokens ---
GEMQ~\cite{gemq2026}), and the published response is entirely
\emph{offline}. Our own
measurement sharpens the point: quantizing the router itself to INT4
flips $63.2\%$ of top-6 selections on DeepSeek-V2-Lite, because the
quantity a routing certificate would protect is an order of magnitude
smaller than the noise it would have to survive:
\begin{equation}
\underbrace{\operatorname{med}\bigl(s_{(k)}-s_{(k+1)}\bigr)}
      _{\text{routing margin}}=0.023
\;\ll\;
\underbrace{2L\,\operatorname{med}(\varepsilon_\infty)}
      _{\text{gate noise}}=0.354
\quad (L=1) .
\label{eq:margin-vs-noise}
\end{equation}
This explains,
with numbers, a practice the industry adopted by instinct (production
MXFP4 checkpoints exclude \texttt{mlp.gate} without exception) --- and
it identifies the quantity an \emph{online} certificate must protect:
the margin between the $k$-th and $(k{+}1)$-th routing logits against
\emph{upstream} precision noise.

This paper prices runtime compression of serving state. The account is
one chain --- an object, its price, and the quantifier that decides
whether the price means anything on the request not yet seen --- and we
state for each link both what it buys and where it stops:
\begin{itemize}
\item \textbf{The object: a risk that can actually be spent.} Prior
  certified systems budget per-event failure by union bound and state
  that no union bound survives free-running decoding. The tempting fix
  --- let the e-process itself admit actions --- changes the
  guarantee's object (it controls wrongly rejecting an honest radius
  model, not admitted-action failure; we give the exact
  counterexample). We close the sound branch instead: a machine-checked
  anytime-valid budget on \emph{cumulative realized loss}
  (\texttt{cumloss\_admission}) whose per-event terms the write-path
  audit already measures (\S\ref{sec:av}).
\item \textbf{The price: what one unit of that risk buys a served
  token.} A machine-checked chain carries the witness's per-layer
  variance budget to a request-level served-TV tail bound
  (\texttt{request\_tail\_of\_served\_tv}), and a design law turns the
  served-TV target into a threshold knob; a three-layer audit of that
  law's instantiation localizes the entire $1064\times$ gap between the
  certified witness and the served output to the gate's operating point
  --- a price the law now states rather than an unknown
  (\S\ref{sec:instantiate}, \S\ref{sec:perread}). Open and stated as such: the model-side
  propagation constants (measured, and the measurement \emph{falsifies}
  the first-order surrogate) and the live controller beyond its
  $+3.8\%$ judgment cost.
\item \textbf{The quantifier: whether any of it holds on the next
  request.} A bound calibrated on the histories one happens to have is
  not yet a bound on the history about to be served. Exchangeable
  without-replacement sampling over $80$ serving histories lifts it and
  replaces binary conformal prediction's vacuous certificates with
  order-statistic bounds that discriminate ($0.41$ against $0.51$
  calibration risk, \S\ref{sec:quantifier}). The premise we cannot supply is that
  our population is a pool we constructed rather than runtime traffic;
  we say so wherever the number is used.
\end{itemize}
Two things this paper deliberately does not do. It does not certify
expert routing: that alternative object we built first and rejected on
measured grounds, an adjudication that is the companion measurement
paper's subject\footnote{Companion measurement paper, \emph{What to Protect When You Quantize a Mixture of Experts}, by the same authors, posted concurrently. No result of that paper is claimed here, and this paper is self-contained without it.} and none of whose results are claimed here ---
\S\ref{sec:router} states only the boundary it draws, which is why the
object priced above is the served output rather than the routing
decision. And it does not bury the kill-shots that shaped the design:
the gate-quantization arm above, and the pre-registered death of a
value-transport rescue hypothesis for KV quantization (synthetic pool:
rank correlation $0.872$, zero rescuable condemned reads; real-pool
recheck: the production quantizer family never reaches the condemnation
stratum, adjudicated ``inconclusive, no high shift'', metric-to-error
correlation $0.935$), which redirected this paper from KV distortion
metrics to precision \emph{decisions}.

Separately, the precondition study for request-conditioned geometry ---
which no bound in this paper consumes --- is reported in
Appendix~\ref{app:geometry} rather than claimed as a result.

\begin{table}[htbp]\centering\small
\setlength{\tabcolsep}{4pt}
\resizebox{\linewidth}{!}{%
\begin{tabular}{@{}llrrl@{}}
\toprule
Model & Routing & Experts / top-$k$ & Layers & Where it carries the paper\\
\midrule
DeepSeek-V4-Flash-FP8 & sqrtsoftplus, bias-corrected & $256$ / $6$ & $43$
  & served-output bridge; anytime ledger; physical gate;\\
 & & & & \quad $80$-history conformal extrapolation\\
GLM-5.2 (FP8, W4A-FP8) & sigmoid, bias-corrected & $256$ / $8$ & $78$
  & capacity--cost evaluation in the production stack\\
DeepSeek-V2-Lite & softmax, greedy & $64$ / $6$ & $27$
  & router-margin certificate; flip$\to$damage arms\\
Qwen1.5-MoE-A2.7B & softmax, greedy & $60$ / $4$ & $24$
  & cross-model replication of the flip arms\\
Qwen2.5-7B & dense & --- & $28$
  & activation-geometry precondition (dense control)\\
\bottomrule
\end{tabular}}
\caption{Model coverage. The served-output line runs on a
production-scale fine-grained MoE ($256$ experts, top-$6$); the
capacity evaluation on a second, architecturally distinct one; the
routing-certificate arms on smaller greedy-top-$k$ MoEs, treated as a
same-family proxy (\S\ref{sec:router}).}
\label{tab:models}
\end{table}

\paragraph{Scope of the controlled action, stated up front.}
Table~\ref{tab:models} lists which model carries which result. The
admission engine is object-agnostic --- its inputs are audited
per-event losses and predictable MGF terms, whichever component
produces them --- but the action it controls in this paper is
\emph{KV-write precision} (dithered compression vs.\ exact retention)
inside an MoE serving stack. Expert-weight precision actions --- the switch whose economics
\S\ref{sec:exp} measures at $3.73\times$ capacity headroom --- are the
engine's designated next object, not a delivered one. We price that
headroom rather than quote it: pressure-probed at $26$k-token requests
it converts to $2.0$--$2.24\times$ the in-flight requests, and from
there to \emph{nothing} when prefixes never repeat and to
$1.19\times$ throughput (a measured lower bound) when each session
reuses its own context (Table~\ref{tab:convert}); which it becomes is
decided by the workload, and we report both. The switch itself --- per-component
weight-plane management --- is motivated here, not implemented: an
engine proved on the first object, the second object's value
quantified and its mechanism left open.

Everything is measured under a provenance-stamped, machine-adjudicated
protocol: every gate in this paper was pre-registered with its verdict
branches written before the data was seen, and every number traces to a
run artifact. Table~\ref{tab:adjudication} is the full ledger of those
branches and the verdict each one fired.

\section{Setup and inherited contracts}
We inherit the typed-contract vocabulary of our prior systems
(WitCert~\cite{witcert2026} and its observability
companion~\cite{witprobe2026}): typed
input/output metrics with mismatch
rejection; certified / partial / empirical tiers; per-request ledgers;
and machine-checked (Lean~4) soundness for the probabilistic kernel.
Notation: a serving process emits memory/compute events $t=1,2,\dots$
with filtration $(\mathcal F_t)$; a precision plan assigns each linear
component $\ell$ an action $a_\ell$ (bit-width, exact-keep, or
fallback); request-level risk is budgeted as $\delta_{\mathrm{req}}$.
The whole paper turns on one structural requirement, which we state once
and then never relax:
\begin{equation}\label{eq:predictable}
  a_t\in\mathcal F_{t-1},
  \qquad
  L_t\in[0,1]\ \text{is}\ \mathcal F_t\text{-measurable},
  \qquad
  \mathbb{E}\big[L_t\mid\mathcal F_{t-1}\big]\le\mu .
\end{equation}
Decisions must be \emph{predictable} --- chosen before the loss they
incur is revealed --- because an anytime bound over a stream that adapts
to its own outcomes is otherwise not a bound at all.

\subsection{What is inherited and what is new}
Sound runtime certificates for KV-cache quantization exist (ours and
independent work); request conditioning, sound per-read bounds, and
serving-loop evidence are established there for the \emph{KV} domain,
and we do not re-claim them. This paper's claims live strictly in the
two vacant axes: anytime-valid budgets that \emph{drive} decisions,
and the online MoE routing question --- the latter adjudicated
negatively in the companion paper (\S\ref{sec:router}).

An ellipsoid radius conditioned on the request would need activations to
concentrate in low-dimensional subspaces; we test that precondition with
held-out controls and report it in Appendix~\ref{app:geometry}, since no
bound in this paper consumes it.

\section{What not to certify: the routing adjudication, in brief}
\label{sec:router}

A natural first candidate for the certified object is expert-routing
invariance, and we built that certificate before pricing it: a sound
selection-domain margin test with per-model Lipschitz constants,
adjudicated across nine MoE families ($485{,}138$ tokens, zero
soundness violations). The verdict, reported in full in the companion
measurement paper, is negative on three independent grounds --- the
gate binds on $94$--$100\%$ of production traffic and remains unusable
after every sound tightening we could construct (including a measured
per-coordinate ellipsoid); the flips it prevents anti-predict damage;
and set identity provably fails to bound the routing distribution it
was meant to protect. Replaying full-precision routing recovers a
constant $8$--$17\%$ of quantization damage across every knob the
companion paper turns, worth $0.75\%$ of base perplexity at the
four-bit depth production ships, and becomes actively harmful past a
model-dependent damage cliff. We therefore take the routing question as
\emph{settled against certification} and spend this paper on the object
the adjudication points to: the served output's cumulative tail under a
request-level risk budget, which the rest of this paper defines, prices,
and machine-checks.

\section{Anytime-valid admission}
\label{sec:av}
Certified systems to date spend request-level risk by union bound
over a \emph{pre-declared} event count:
\begin{equation}
\delta_t \;=\; \frac{\delta_{\mathrm{req}}}{(t+1)(t+2)},
\qquad
\sum_{t\ge 0}\delta_t \;=\; \delta_{\mathrm{req}} ,
\label{eq:unionbudget}
\end{equation}
which is sound but pays for horizon in advance. Our own prior system
states the limitation precisely: free-running decoding feeds quantized
cache back into future queries, an adaptive dependence ``no union bound
repairs''; its e-process was deployed only as \emph{model validation},
``complementing, not replacing'' the union budget that authorization
consumes.

The tempting construction --- admission gated by the e-process wealth
itself --- is the one we implemented first, and it is \emph{wrong for
the stated guarantee}. Ville's inequality bounds the probability that an
honest radius model ever looks anomalous; it says nothing about the
probability that an admitted action fails. The two differ by more than a
constant. A $2{,}000$-event stream admitted at per-event risk $10^{-4}$
passes the wealth test while incurring an $18\%$ any-failure
probability, so wealth-gating answers a question nobody asked. The
corrected object is a budget on \emph{cumulative realized loss}, and its
admission rule is machine-checked (\texttt{cumloss\_admission}): if the
account is maintained at rate $\lambda$ with the per-event terms the
write-path audit already produces, then
$\Pr(\exists t\le T:\sum_{s\le t}L_s>B_t)\le\delta$. Two facts make this
implementable rather than aspirational. The gate is conservative in the
right direction --- admitting on batch-level upper bounds while settling
on the subset that actually passes preserves the budget
(\texttt{gating\_conservative}) --- and at the request endpoint the certified
account is tight against the physical witness it charges rather than
orders of magnitude loose (median margin of bound over witness $4.6\%$
across $16$ endpoints, witness below bound on all of them). That is an
\emph{endpoint} statement about accumulated totals: the run does not
record a per-event margin trajectory, so individual events may run
looser and offset.

\begin{table}[htbp]\centering\small
\begin{tabular}{@{}lrrrr@{}}
\toprule
& union & \multicolumn{3}{c}{cumloss, by budget $\tilde B$}\\
\cmidrule(l){3-5}
& & $5\!\times\!10^{3}$ & $2\!\times\!10^{4}$ & $5\!\times\!10^{4}$\\
\midrule
budget exhaustion & $100\%$ & --- & --- & ---\\
fallback-to-exact rate (median) & $0.315$ & $0.842$ & $0.339$ & $0.0079$\\
admission calls (armed) & $0$ & $641{,}284$ & $1{,}336{,}588$ & ---\\
judgment cost (s/request) & $344.5$ & $339.5$ & $357.1$ & ---\\
violations & $0$ & $0$ & $0$ & $0$\\
\bottomrule
\end{tabular}
\caption{Dual-accounted admission, measured in a production serving
stack. The union budget exhausts on every long request; cumulative-loss
admission does not --- what it buys is a \emph{monotone} coverage/budget
curve, not a free lunch, and each column must be read at its own budget:
at $\tilde B=2\!\times\!10^{4}$ the fallback rate ($0.339$) is
\emph{worse} than the union baseline's $0.315$, and the pre-registered
mechanism control recorded exactly that, so that arm's adjudication is
reported \texttt{INVALID} rather than quoted as a win. The
$31.49\%\!\to\!0.79\%$ drop is real but belongs to the $2.5\times$
wider budget $\tilde B=5\!\times\!10^{4}$. Zero authorization
violations across all arms. Full adjudication notes
(cross-run calibers, non-monotone judgment cost, physical-gate
non-vacuity) in Appendix~\ref{app:ledgernotes}.
}
\label{tab:ledger}
\end{table}

\begin{figure}[htbp]\centering
\includegraphics[width=0.58\linewidth]{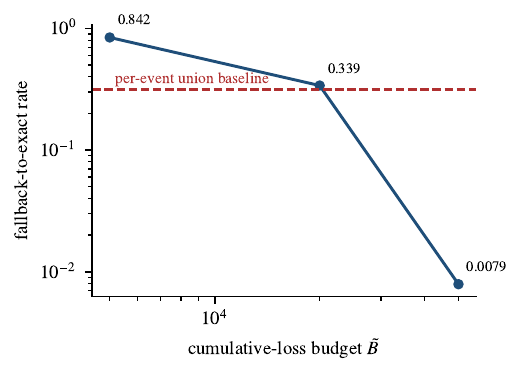}
\caption{Coverage is bought, not found. Fallback-to-exact rate vs.\
cumulative-loss budget (\emph{exploratory} normalized-$z$ arm), union
baseline for reference: at Table~\ref{tab:ledger}'s budget the
cumulative account is \emph{worse} than union; the large gain belongs
to a $2.5\times$ wider budget.}
\label{fig:coverage}
\end{figure}

We then close the loop: a physical-gate armed run
(\texttt{WITCERT\_CUMLOSS\_MODE=phys}, frozen commit, nine
pre-registered gates all passing) drives KV-write admission by the
\emph{physical} account itself --- $317{,}293$ and $352{,}333$
admission calls in the loose and tight arms. Three properties hold on
live data. The physical anytime bound is never crossed: per account, at
every admission call,
\begin{equation}
\max_{t}\;\bigl(\text{realized charge}_t-\text{admitted charge}_t\bigr)
\;\le\;0 ,
\label{eq:physgap}
\end{equation}
the $L{=}W$ instantiation of \texttt{cumloss\_admission} realized, and
the runtime counterpart of \texttt{gating\_conservative} and
\texttt{admit\_implies\_realized\_within} --- admitting on the
full-batch upper bound is provably safe per step, here measured
account-by-account. The controller modulates by budget (tight
exact-fallback $0.48$ vs.\ loose $0.22$ against the same round's union
baseline $0.31$). And in a pre-registered held-out confirmatory round
(frozen $\lambda$ and budget, topic-disjoint), all three properties
replicate and the coverage gain becomes a \emph{same-round} comparison:
physical arm $0.14$ vs.\ its own union $0.30$. Not recorded: the margin
trajectory $\max_t(\mathrm{realized}_t-\mathrm{prospective}_t)$ --- the
zero-violation fact says only that the conservatism's slack is
non-negative. The judgment costs $+3.8\%$ wall time.

Two honest caveats remain. $\lambda$ and the budgets were calibrated
in one exploratory round and confirmed in one frozen held-out round ---
a single replication, not a sweep; and $W$ is a
local certified witness (band-norm of K/V residual), \emph{not}
served TV/NLL. The served-output leg has a three-part status.
\emph{Per read} it is sound: the inherited machine-checked
\texttt{tv\_le\_eform} bounds one attention softmax's TV by
$\tfrac12(A_c^2{-}1)$. Soundness is not usefulness
(Figure~\ref{fig:vacuity}): reconstructed on captured reads the bound
exceeds $1$ --- certifies nothing --- on $6.7\%$ of reads at $m{=}2$
masked mantissa bits, $41.8\%$ at $m{=}3$, $92.9\%$ at $m{=}4$; a
worst-key probe returns ``$\mathrm{TV}\le1.54$''. Each read carries a
\emph{valid} certificate, not always an \emph{informative} one.
\begin{figure}[htbp]\centering
\includegraphics[width=0.56\linewidth]{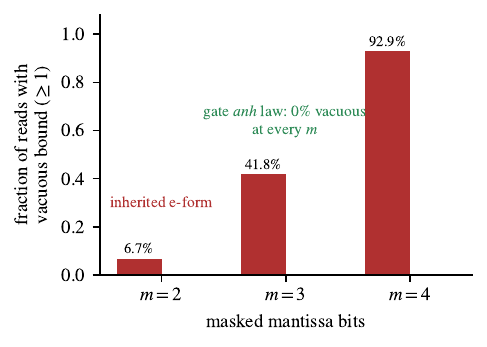}
\caption{Valid is not the same as informative. Fraction of captured
reads on which each per-read bound exceeds $1$ (certifies nothing) as
mantissa bits are masked: the inherited e-form degrades from usable to
useless; the gate $\tanh$ law of \eqref{eq:gatelaw} stays non-vacuous.
This measurement motivated replacing the bound, not post hoc.}
\label{fig:vacuity}
\end{figure}

\emph{Cumulatively} the naive obstacle --- total variation does not
add across reads and layers --- dissolves: what adds is the
output-logit perturbation, because the residual stream is additive, so
the whole request's served TV is one tanh over a \emph{sum} of
per-layer bounds (\texttt{cumulative\_output\_tv}, developed with the
measurement in \S\ref{sec:quantifier} and \eqref{eq:cumtv}).
\emph{Empirically}, per-read attention-distribution shift and per-read
output error move together on real pools (Spearman $0.935$ --- an
ordering statement, not a magnitude one, on the attention leg); it
motivates the pre-registered target for a sound per-layer bound, and is
not evidence that the sum is small. The expert-weight action is the
same shape of open work.

\begin{figure}[htbp]\centering
\includegraphics[width=0.62\linewidth]{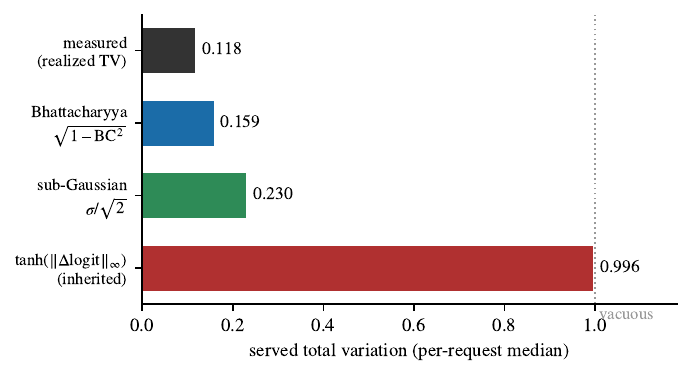}
\caption{Escaping the $\ell_\infty$ wall. The cumulative served total
variation realized by the deployed compressor (top) against three sound
upper bounds, per-request medians over eight requests on the armed
model. The bound inherited from the per-read theory,
$\tanh\|\Delta\mathrm{logit}\|_\infty$, is \emph{vacuous} at $0.996$: it
prices the worst logit coordinate, which sits on a low-probability
token. Total variation instead prices \emph{mass movement}, and both a
mass-overlap bound (Bhattacharyya, $0.159$) and a pure-variance bound
(sub-Gaussian, $0.230$) are non-vacuous and machine-checked. All quantities come from the same verdict artifact as the prose.
\textbf{Caliber:} these are computed \emph{post hoc} from paired exact
and compressed logits, so the figure certifies the \emph{geometry} ---
which functional can bound realized served TV at all --- not a
certificate the online witness emits today; supplying the same bound from
the witness alone is Theorem~\ref{thm:main}'s a-priori branch, whose
open constant we state in \S\ref{sec:limits}.}
\label{fig:ladder}
\end{figure}

\begin{theorem}[Per-step variance budget to request-level served risk]
\label{thm:main}
Let a request's decoding steps be indexed by history $h$; let $p_h$ be
the exact served distribution at $h$ and let the compressed step's
distribution be the tilt $p'_{h,\omega}(v)\propto p_h(v)
e^{\delta_h(\omega,v)}$ induced by that step's rounding draws $\omega$
(the whole per-step draw vector, whatever its internal structure).
Assume, \emph{for every history $h$}:
\emph{(i)} gauge, $\sum_v p_h(v)\delta_h(\omega,v)=0$ for every $\omega$;
\emph{(ii)} a per-coordinate moment bound
$\mathbb{E}_\omega[e^{\delta_h(\omega,v)}]\le B_h(v)$, where writing
$B_h(v)=\exp(b_h(v)+s_h(v)^2/2)$ separates a \emph{deterministic bias}
$b_h(v)$ from a fluctuation proxy $s_h(v)^2$;
\emph{(iii)} a uniform step budget $\mu$ with
$\sqrt{\log\sum_v p_h(v)B_h(v)}\le\mu$.
Then the step loss obeys the mass-weighted bound
\begin{equation}\label{eq:step}
  \mathbb{E}_\omega\big[\mathrm{TV}(p_h,p'_{h,\omega})\big]
  \;\le\;\sqrt{\textstyle\log\sum_v p_h(v)B_h(v)}\;\le\;\mu ,
\end{equation}
and if the ledger admits every step at rate $\lambda>0$ --- i.e.\ the
admission invariant \eqref{eq:admission} of \texttt{cumloss\_admission}
holds,
\begin{equation}\label{eq:admission}
  \frac{\log(1/\delta)+\sum_{s\le t}\log\!\big(1+\mu(e^\lambda-1)\big)}
       {\lambda}\;\le\;B_t
  \qquad\text{for every }t,
\end{equation}
which is what the controller checks before admitting --- then the request-level guarantee \eqref{eq:tail} holds: for the
whole request
\begin{equation}\label{eq:tail}
  \Pr\Big(\exists\,t\le T:\ \textstyle\sum_{s\le t}\mathrm{TV}_s>B_t\Big)
  \;\le\;\delta .
\end{equation}
\end{theorem}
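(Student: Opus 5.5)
The proof has two independent halves, which I would prove separately and then join. The first half is the per-step bound \eqref{eq:step}; the second is the anytime tail \eqref{eq:tail}, obtained from \texttt{cumloss\_admission}.

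\textbf{Per-step bound.} Fix a history $h$ and a draw $\omega$. The compressed distribution is the tilt $p'(v)=p_h(v)e^{\delta(v)}/Z$ with $Z=\sum_v p_h(v)e^{\delta(v)}$. The gauge (i) and Jensen give $\log Z\ge\sum_v p_h\delta=0$, so $Z\ge 1$.

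I would then bound total variation by KL. By Pinsker in the reversed direction,
\[
\mathrm{TV}(p_h,p')\le\sqrt{\tfrac12\mathrm{KL}(p_h\|p')} .
\]
Here $\mathrm{KL}(p_h\|p')=\sum_v p_h(\log Z-\delta)=\log Z$, using the gauge again. Hence
\[
\mathrm{TV}\le\sqrt{\tfrac12\log Z(\omega)}\le\sqrt{\log Z(\omega)} .
\]

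Next I take $\mathbb{E}_\omega$. Since $\sqrt{\log(\cdot)}$ is concave on $[1,\infty)$, Jensen gives
\[
\mathbb{E}_\omega\sqrt{\log Z}\le\sqrt{\log\mathbb{E}_\omega Z} .
\]
By linearity and the moment bound (ii),
\[
\mathbb{E}_\omega Z=\sum_v p_h(v)\,\mathbb{E}_\omega e^{\delta(v)}\le\sum_v p_h(v)B_h(v) .
\]
Monotonicity of $\sqrt{\log}$ then yields the first inequality of \eqref{eq:step}, and (iii) gives $\le\mu$.

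Concavity needs $Z\ge1$, which the gauge supplies for every $\omega$. If it is inconvenient in Lean, I would instead use concavity of $x\mapsto\sqrt{\log\max(x,1)}$. Note that no structure of $\omega$ is used, which matches the ``whole draw vector'' phrasing.

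\textbf{Tail bound.} Set $L_s=\mathrm{TV}_s\in[0,1]$. It is $\mathcal F_s$-measurable once the draw at step $s$ is revealed. Because the per-step bound holds for every history $h$, and the history at step $s$ is $\mathcal F_{s-1}$-measurable, we get
\[
\mathbb{E}[L_s\mid\mathcal F_{s-1}]\le\mu .
\]
This is exactly the predictability contract \eqref{eq:predictable}.

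I then invoke \texttt{cumloss\_admission}. Its proof, which I would reproduce only if the exported statement does not match, goes as follows. Convexity of $e^{\lambda x}$ on $[0,1]$ gives
\[
e^{\lambda L}\le 1+L(e^\lambda-1),
\]
so $\mathbb{E}[e^{\lambda L_s}\mid\mathcal F_{s-1}]\le 1+\mu(e^\lambda-1)$. Therefore
\[
M_t=\exp\Bigl(\lambda\textstyle\sum_{s\le t}L_s-\sum_{s\le t}\log(1+\mu(e^\lambda-1))\Bigr)
\]
is a nonnegative supermartingale with $M_0=1$. Ville's inequality gives $\Pr(\exists t:M_t\ge1/\delta)\le\delta$. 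On the complement, for every $t$,
\[
\textstyle\sum_{s\le t}L_s<\bigl(\log(1/\delta)+t\log(1+\mu(e^\lambda-1))\bigr)/\lambda\le B_t ,
\]
where the last step is the admission invariant \eqref{eq:admission}.

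\textbf{Main obstacle.} I expect the hardest part to be the measure-theoretic bridge between the two halves. The step bound is stated pointwise in $h$ with $\omega$ averaged out. To use it as a conditional expectation we need the step-$s$ draw to be, conditionally on $\mathcal F_{s-1}$, distributed as the step law at the realized history. That amounts to a regular conditional distribution, or equivalently a kernel formulation of the decoding process. I would first try to phrase the process as a sequence of Markov kernels indexed by history, so that the conditional expectation is literally the integral in \eqref{eq:step}.

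The remaining subtlety is that $B_t$ may depend on $t$ but must be deterministic, or at least predictable, for Ville's argument to apply to the event in \eqref{eq:tail}. I would take $B_t$ deterministic, as the invariant's form suggests.
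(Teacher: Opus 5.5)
Your proof is correct. Its second half is the paper's own composition: Bernoulli domination from convexity of $e^{\lambda x}$ on $[0,1]$, the exponential supermartingale, Ville, then the admission invariant. Your way of bridging to the conditional expectation, by taking the whole per-step draw vector as the ledger's event, also matches the paper.

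Where you genuinely differ is the per-draw kernel. You note that under the gauge $\mathrm{KL}(p_h\|p'_{h,\omega})=\log Z(\omega)$ exactly, and then apply Pinsker. The paper instead goes through mass overlap:
\begin{itemize}
\item Jensen plus the gauge give $\mathrm{BC}(p_h,p'_{h,\omega})=Z^{-1/2}\sum_v p_h(v)e^{\delta_h(\omega,v)/2}\ge Z^{-1/2}$, i.e.\ $e^{-\sigma^2/4}$ with $Z=e^{\sigma^2/2}$.
\item The Cauchy--Schwarz lemma \texttt{tv\_le\_hellinger} then yields $\mathrm{TV}\le\sqrt{1-1/Z}\le\sqrt{\log Z}$.
\item Jensen is applied in two steps, first $\sqrt{\cdot}$ and then $\log$, rather than to the composite. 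The Fubini step is the same finite-sum linearity you use.
\end{itemize}

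What each route buys:
\begin{itemize}
\item Yours is sharper by a factor $\sqrt2$ than the stated $\sqrt{\log Z}$; you obtain $\sqrt{\tfrac12\log Z}$ and then discard the factor. It also reads the step budget directly as a bound on expected KL.
\item The paper's avoids Pinsker, which is a heavier dependency to machine-check, and reuses an elementary lemma it already has. Its intermediate $\sqrt{1-1/Z}$ never exceeds $1$, so it is the better of the two once $\log Z$ is large. It is also the ``mass movement'' geometry behind the paper's Bhattacharyya certificate.
\end{itemize}

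Drop one aside. The fallback $x\mapsto\sqrt{\log\max(x,1)}$ is not concave on $(0,\infty)$: it is flat on $(0,1]$ and has infinite right slope at $1$. For example, its value at $1$ is $0$, which is below the average of its values at $0.5$ and $1.5$. You do not need a fallback, since the gauge already gives $Z(\omega)\ge1$ for every $\omega$. If composite concavity is awkward to formalize, the paper's two-step Jensen needs only $\log Z\ge0$ and $Z>0$.
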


\noindent\emph{Proof (machine-checked).} The statement is
\texttt{request\_tail\_of\_served\_tv\_massweighted}; its Doob-supplied
root, taking bounded differences directly, is
\texttt{request\_tail\_of\_served\_tv\_doob\_massweighted}. The argument
--- sub-Gaussian kernel, mass-weighted Jensen/Fubini step, and the
Bernoulli domination \eqref{eq:bernoulli} whose mean-one factor is what
the anytime ledger consumes --- is spelled out in
Appendix~\ref{app:proof}. Axioms are the three standard ones, no
\texttt{sorry}.
\begin{equation}\label{eq:bernoulli}
  \mathbb{E}\big[e^{\lambda\mathrm{TV}}\big]\;\le\;1+\mu\big(e^\lambda-1\big).
\end{equation}

\section{Instantiating the step budget}
\label{sec:instantiate}
The theorem above is a beam with two ends. This section supplies the
left end --- the per-step moment constant --- through five routes, of
which four failed and each failure is informative.

\emph{Is the budget of the right order?} Substituting one realized
draw for $B_h(v)$ gives $0.230$ (median over the $8$ captured requests)
against a realized served TV of $0.118$ --- a \emph{realization}, not a
bound, but it places the object the theorem prices at the right order
where the system runs; the three steps from there to a certificate are
stated in Appendix~\ref{app:rightorder}.

\emph{Making the budget computable.} The step budget
$\sum_v p_h(v)B_h(v)$ is a sum over $129{,}280$ vocabulary terms per
decoded token, which no serving path will pay --- and does not have to:
for any high-probability set $S$ and tail envelope $B_{\max}\ge B_h(v)$
for $v\notin S$,
\begin{equation}\label{eq:topk}
  \sum_v p_h(v)B_h(v)\;\le\;\sum_{v\in S}p_h(v)B_h(v)
    +\Big(1-\sum_{v\in S}p_h(v)\Big)B_{\max},
\end{equation}
so the $O(|S|)$ right-hand side may be used in hypothesis (iii)
unchanged (\texttt{massweighted\_topk\_bound}, composed in
\texttt{request\_tail\_of\_served\_tv\_topk}); served distributions
concentrate, so the tail mass multiplying $B_{\max}$ is small and the
budget is spent on the tokens that carry the probability --- a per-token
evaluable bound.

\emph{Instantiating the constants.} What remains is to supply the
theorem's moment hypothesis for a real network. We tried five routes and
kept the fifth; the ladder in Table~\ref{tab:ladder} is the honest
summary, and its shape carries a lesson we state once rather than four
times.

\begin{table}[htbp]\centering\small
\setlength{\tabcolsep}{3pt}\footnotesize
\begin{tabular}{@{}llc@{}}
\toprule
Route to the step budget & What it prices & Value\\
\midrule
Global Lipschitz envelope (weights) & worst case over $\mathbb{R}^d$ & $10^{221}$\\
Local operator norm, as a supremum & worst direction, rank $42$ & $\sigma_{\max}\!\approx\!16.0$\\
Per-layer, per-token empirical MGF & assumes layer-local additivity & $0.0390$ (model-dep.)\\
Empirical Bernstein on it & worst-case range $e^{c}$ & $7.64$ (vacuous)\\
\midrule
\textbf{Joint scalar, betting CS} & \textbf{the functional itself} &
$\mathbf{0.0322}$\\
\quad on deployed rounding & \textbf{no surrogate} & $\mathbf{0.0552}$\\
\bottomrule
\end{tabular}
\caption{Five routes to the same number (the surviving one shown at two
operating points). The first four fail for one reason stated four ways: each prices a worst case --- over inputs, over
directions, over a modelling assumption, over a range --- while the
theorem asks for an expectation over rounding draws. The bounded
differences behind rows three and four we did measure ($c_{\max}=3.81$,
per-token, gauge-centred), and they are diagnostics, not budgets: rows
three and four rest on layer-local additivity, which our own saturation
measurement rejects. Only the fifth route --- the last two rows ---
bounds the quantity the theorem consumes.}
\label{tab:ladder}
\end{table}

The failures are informative. A \emph{global} Lipschitz envelope is
sound and hopeless: power iteration over all $43$ layers gives a stack
envelope of $10^{221}$ (median per-layer factor $1.39\times10^{5}$,
largest spectral norm $387$),
optimistic in three separate ways (power iteration lower-bounds the
norm, RMSNorm omitted, experts sampled). Restricting to the region a
request occupies buys a hundred orders of magnitude --- formalized as a
runtime-checkable containment obligation
(\texttt{bdd\_diff\_of\_lipschitz\_on}, \texttt{lip\_iterComp\_on}) ---
but does not suffice as a supremum: the measured ratio's dispersion
puts the effective rank at $42$ of $1.37\times10^{6}$, lifting the
operator-norm estimate to $16.0$, $251\times$ the typical ratio of
$0.0638$. (We first read that typical ratio as a Lipschitz constant,
which was wrong, and retract it.)

Figure~\ref{fig:gamma} shows the measurement that rules out the whole
layer-local route --- worst-case and typical alike. The resolution is
not a sharper worst case. Fixing a history, resampling
\emph{all} layers jointly, and reducing each draw to the scalar
$Z_j=\sum_v p_v e^{\tilde\delta_{j,v}}$ removes the layer-local product,
the union over $129{,}280$ tokens, and the per-layer accumulation of bias
in one move. Better still, the ledger needs $\mathbb{E}[\mathrm{TV}]$
rather than $\mathbb{E}[Z]$, and total variation lies in $[0,1]$
\emph{deterministically}, so a betting-style confidence sequence bounds
it with no appeal to any sample extremum. That it is the \emph{same}
machinery the ledger runs on is not a slogan: the betting factor
\begin{equation}\label{eq:betting}
  g_h(x)=1+\lambda_h(m-x),\qquad \lambda_h\ge0,
\end{equation}
is nonnegative by construction and has mean at most one under
$H_0:m\le\mathbb{E}[X]$, so
Ville's inequality --- the one already formalized for the ledger ---
bounds the probability that the capital ever crosses $1/\delta$
(\texttt{betting\_factor\_eprocess}, \texttt{betting\_ucb\_anytime}).
Inverting the test family built from \eqref{eq:betting} is what defines
the bound, so its anytime coverage rests on a machine-checked argument
rather than on a coverage simulation alone. The number:
$\mathbb{E}_\omega[\mathrm{TV}]\le0.0322$ at $\alpha=0.01$ over $279$
draws, against a realized mean of $1.25\times10^{-5}$ (the scalar-moment
route gives a point estimate of $0.0023$ on the same draws). The
per-token cost is bounded too: enveloping the tail on the product
$p_vB_v$ rather than on $B_v$ alone --- the same anti-correlation that
made the $\ell_\infty$ bound vacuous --- cuts the vocabulary sum to a few
thousand head tokens at a stated price rather than for free: the
tail-enveloped budget is $0.276$ at $K=8192$ against $0.0489$ over the
full $129{,}280$-token vocabulary, a $5.6\times$ envelope that degrades
to $0.967$ at $K=1024$ and to a vacuous $1.288$ at $K=64$. These belong
to the per-layer bounded-difference route we retain as a diagnostic, and
$K=8192$ is the smallest workable setting rather than headroom.

\begin{figure}[htbp]\centering
\includegraphics[width=0.62\linewidth]{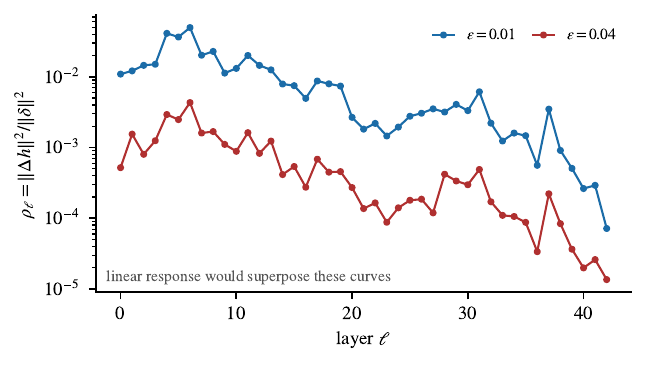}
\caption{The measurement that shaped the theorem. Per-layer propagation
ratios $\rho_\ell=\|\Delta h\|^2/\|\delta\|^2$ from random probes ($43$
layers, noise floor exactly zero). A first-order response would make the
two injection scales superpose; they separate by an order of magnitude
at every layer, and the pre-registered linearity gate fails by a median
factor of $14.1$ ($86$ layer--probe pairs, consistent with a saturated
response: $\rho\propto\varepsilon^{-2}$ gives $16$). We therefore do not
report these as sound propagation constants; Theorem~\ref{thm:main} is
stated so no layer-local assumption enters it --- the measurement rules
out the cheap route to its constants, not the theorem.}
\label{fig:gamma}
\end{figure}

\emph{Removing the surrogate.} Those draws were injected perturbations;
the deployed rounding can be sampled directly, because the rounding seed
mixes in the request identifier --- repeating a prompt with radix
caching disabled draws fresh rounding each time. Four pre-registered
gates decide whether such a run means anything: compression must occur
($1{,}999{,}586$ entries did), the draws must vary (all $160$ realized
total variations distinct --- coincidence would have meant re-running
one draw, reported as failure), the arms must reconcile, and the bound
must be non-vacuous. It is:
\begin{equation}
\mathbb{E}_\omega[\mathrm{TV}]\;\le\;0.0552
\qquad\text{vs.}\qquad
\text{realized mean}\;=\;1.64\times10^{-5},
\label{eq:nonvacuous}
\end{equation}
a factor of $3{,}370$ of slack, and the sample-size reading of that
slack is controlled rather than inferred: an earlier pass of the same run at $56$ draws gave
$0.1463$ with an essentially unchanged mean, so the constant tracks the
sample and not the quantity. This is a statement about the deployed
compressor rather than a stand-in for it.

\section{The quantifier: extrapolating across histories}
\label{sec:quantifier}
Everything so far holds \emph{at a history}. The theorem's hypothesis is
quantified over every history a request visits, and that gap is where the
remaining distance to a deployed certificate lives. We measure it, and
then close it on a stated population.

\emph{One history is not enough --- we measured how far from enough.} We repeat the design across $80$ histories drawn
\emph{without replacement} from a pool of $148$ candidates (forty
deployed-rounding draws each; $60{,}706{,}357$ compressed entries; the
exact arm bitwise identical across repeats). The sampling design is not
cosmetic: drawing without replacement from a stated pool makes the
histories \emph{exchangeable} by construction --- exactly the
hypothesis the extrapolation bound needs, and one our first attempt
(histories enumerated at increasing lengths) did not satisfy.

Figure~\ref{fig:hist} shows the outcome of the eighty-history design.
Per-history mean total variation ranges from $2\times10^{-6}$ to
$2.6\times10^{-1}$: a factor of $133{,}849$. The single-history number
above sits at the insensitive extreme of that range and is not
representative of anything.

\begin{figure}[htbp]\centering
\includegraphics[width=0.78\linewidth]{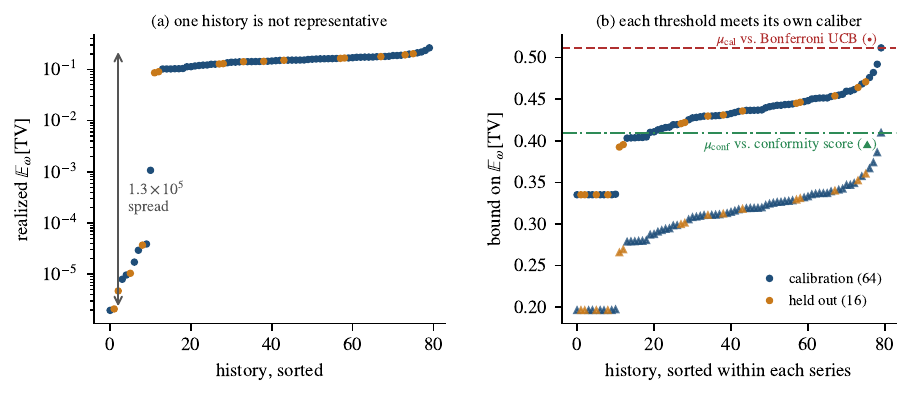}
\caption{Eighty histories, one certificate each. \textbf{(a)} Realized
mean served-output shift, sorted: five orders of magnitude separate the
extremes, so the choice of history decides a single-history answer.
\textbf{(b)} Certificates, each threshold against \emph{its own}
caliber: $\mu_{\mathrm{cal}}$ vs.\ Bonferroni-corrected upper bounds
($\bullet$, $\alpha'=1.25\times10^{-4}$), $\mu_{\mathrm{conf}}$ vs.\
conformity scores ($\blacktriangle$, $\alpha=0.01$, no Bonferroni owed).
Every held-out history clears both; held-out points never entered
either threshold.}
\label{fig:hist}
\end{figure}

The calibration procedure now certifies transfer. With a split fixed
in advance ($64$ calibration, $16$ held out), the certified reading is
three-valued --- a held-out history \emph{certifies} when its upper
bound falls below $\mu_{\mathrm{cal}}$, is \emph{certified to violate}
only when its \emph{lower} bound exceeds it, and is otherwise
\emph{undetermined} (comparing a noisy sample mean against a confidence
bound decides nothing, which is how we first concluded otherwise). All
$16$ held-out histories certify, and the result survives charging the
draw-level confidence honestly: a Bonferroni split puts each history at
$\alpha'=1.25\times10^{-4}$ and loosens $\mu_{\mathrm{cal}}$ to
$0.5116$.

Counting those outcomes gives a Clopper--Pearson bound of $0.2501$ on
the rate at which a further history fails to certify --- the weakest
instrument available for the job (the full risk decomposition
\eqref{eq:risk} and its sample-size arithmetic are in
Appendix~\ref{app:cprisk}); the conformal route below replaces it.

\paragraph{The counting instrument was the bottleneck: a conformal replacement.}
That $0.2501$ is not what the data can support; it is what \emph{our
instrument} could extract. Counting held-out pass/fail discards the
order structure among per-history scores --- the only structure
exchangeability needs. The order statistic uses it directly: for $N+1$
exchangeable scores the events ``score $i$ is the strict maximum'' are
pairwise disjoint and equiprobable, so
\begin{equation}\label{eq:conformal}
\Pr\bigl(S_{\text{new}}>\max_{i\le N}S_i\bigr)\;\le\;\tfrac{1}{N+1},
\end{equation}
and a single union bound on top of \eqref{eq:conformal} for the new
history's own coverage gives
$\Pr(\mathbb{E}_\omega[\mathrm{TV}\mid h_{\text{new}}]>\mu_{\mathrm{conf}})
\le 1/(N+1)+\alpha$ (Lean: \texttt{conformal\_risk\_union}; disjointness
is derived, exchangeability enters as an explicit hypothesis).
On the identical $80$ histories this improves both numbers, but the two
routes do not deliver the same \emph{kind} of guarantee and we will not
quote a ratio across kinds. The conformal $0.0254$ is a \emph{marginal}
bound: the probability is averaged over the calibration draw as well as
the new history. The Clopper--Pearson $0.2501$ is a \emph{high-confidence}
statement about the calibration set we actually drew. Comparing them
directly would repeat, one level up, the caliber swap this paper is
about.

The like-for-like comparison exists and we make it, matching the
instrument to the sampling design: drawn without replacement from a
finite pool, the textbook $\mathrm{Beta}(1,N)$ training-conditional
bound does not apply (it assumes i.i.d.\ draws from a continuum), and
the right object is the negative hypergeometric. With $M=148$ pool
members and $N=64$ calibration draws,
$\Pr(J\ge j)=\binom{M-j}{N}/\binom{M}{N}$ for the number $J$ of pool
members exceeding the calibration maximum, so with probability
$\ge0.99$, $J\le7$, and a further draw misses coverage with probability
at most $7/84=0.0833$ ($0.0933$ with the new history's own $\alpha$
added) --- a $99\%$-confidence statement of the same kind
Clopper--Pearson makes, a factor of $2.7$ against its $0.2501$, and the
number we claim. (The marginal $0.0254$ is the right figure for
long-run average risk across recalibrations, the wrong one for today's
calibration set.) The threshold itself tightens, from
$\mu_{\mathrm{cal}}=0.5116$ to $\mu_{\mathrm{conf}}=0.4095$:
calibration scores are conformity scores, not certificates, so the
per-history Bonferroni split disappears --- a fifth of the histories
had been spent on a held-out check, and the rest charged twice.

\begin{figure}[htbp]\centering
\includegraphics[width=0.62\linewidth]{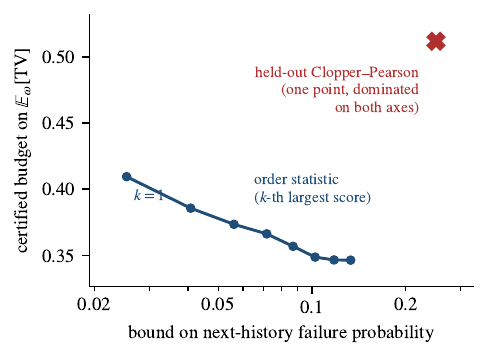}
\caption{Same measurements, two instruments. Counting held-out
pass/fail yields a single point; the order statistic yields a frontier,
and the counting point is dominated on \emph{both} axes. Moving along the
frontier trades a looser certified budget for a smaller failure
probability at no extra measurement.}
\label{fig:frontier}
\end{figure}

Exchangeable by construction, this is a bound on a stated population
--- the uniform distribution over the $148$-candidate pool --- and we
test it rather than assert it: the $16$ held-out histories are further
draws from the same scheme, none exceeds $\mu_{\mathrm{conf}}$
(expected $\le0.25$ exceedances; the threshold never saw them).
Figure~\ref{fig:frontier} puts the two instruments in one plane: the
paper's recurring diagnosis turned on its own analysis, and the one
instance where it \emph{buys} a claim rather than killing one. What
the bound does \emph{not} cover is runtime traffic. The pool is built,
and built narrowly ($148$ truncations of a few base texts --- prefixes,
not the history stream a request walks), and $16$ test points against
$0.25$ expected exceedances make the held-out check a sanity check, not
a powered confirmation. Closing that gap needs exchangeable sampling
from real traffic, not a sharper inequality. What the order statistic changes is the cost of eventually getting
there: at the marginal caliber a $1\%$ history sub-budget needs
$N=199$ exchangeably sampled histories (uneven splits are cheaper
still), $2.3$--$5.3\times$ fewer than the zero-event Clopper--Pearson
route at the same total; the accounting, the request-level budget
constraint \eqref{eq:budget}, and Figure~\ref{fig:histcost} are in
Appendix~\ref{app:histcost}.

What the experiment establishes is a certified cross-history bound on
a stated population, premise met by construction and checked on
held-out draws; what it does not establish is that the population is
the one a deployed system faces, or that a runtime membership test and
fallback exist. And the run fixes a prompt and a prefill-final position,
varying only the rounding, so ``anytime-valid'' here means valid as
draws accumulate --- not along the histories an autoregressive request
visits, which is what the theorem quantifies over. Lifting to runtime
histories is the remaining structural step (a held-out design over
domains, lengths and decode positions, a per-bucket bound, a fallback
on leaving the calibrated set); the risk then reads as in
\eqref{eq:risk}, of which this paper accounts for the first two terms.
Independence of draws across requests is assumed throughout.

\emph{What supplies hypothesis (ii).} Two routes, same constant.
Tensorization needs each layer's contribution to depend on its own draw
alone --- the idealization our propagation probe falsifies. A Doob
decomposition needs only bounded differences and carries the bias
explicitly, $\mathbb{E}[e^F]\le\exp(\mathbb{E}[F]+\sum_\ell c_\ell^2/8)$
(\texttt{doob\_mgf\_le\_biased}), no layer-local assumption; joined to
the composition's shape by \texttt{condE\_eq\_prod\_sum} and
\texttt{doob\_mgf\_prod\_sum}, the whole path is one theorem whose
inputs are statements about the model and whose output is the
request-level tail bound
(\texttt{request\_tail\_of\_served\_tv\_doob\_massweighted}; one
\texttt{\#print axioms} returns the three standard axioms and nothing
else). What remains is not a proof step but an \emph{instantiation}:
the constants $c_\ell$ in served-logit units from what the witness
records --- and our propagation measurement rules out the first-order
surrogate. That is the open problem, and we state it as the paper's
main one.\medskip

\paragraph{The cumulative tier, measured and soundly bounded.}
Running the armed model twice on eight fixed prompts --- exact KV vs.\
deployed compression --- and diffing the final next-token logits
($\mathrm{E_{out}}=\|\Delta\mathrm{logit}\|_\infty$ on the captured
pre-head hidden state), the realized whole-request served TV is small:
median $0.118$, worst $0.22$ (Figure~\ref{fig:ladder}) --- the
cumulative served effect, end-to-end, \emph{non-vacuous}. What sound
bound certifies it? The obvious $\tanh(\mathrm{E_{out}})$ is vacuous
($0.996$): $\mathrm{E_{out}}$ is a worst-\emph{coordinate} quantity
(median $3.1$, sitting on a low-probability token), and total variation
cares about \emph{mass movement}, not the worst coordinate. The
Bhattacharyya coefficient $\mathrm{BC}=\sum_v\sqrt{p_v p'_v}$ gives, by
Cauchy--Schwarz on $\sqrt{p}\pm\sqrt{p'}$, the machine-checked bound
\begin{equation}
\mathrm{TV}(p,p')\;\le\;\sqrt{1-\mathrm{BC}(p,p')^{2}},
\qquad
\mathrm{BC}(p,p')=\sum_v\sqrt{p_v\,p'_v}
\label{eq:tvbc}
\end{equation}
(\texttt{tv\_le\_hellinger}) --- and \emph{this} sound bound is
non-vacuous: median $0.16$, worst $0.28$, zero violations, within
$1.4\times$ of the measurement. So the cumulative served TV is not only
measured small but bounded by a machine-checked sound certificate that
breaks the $\ell_\infty$ wall. 

A second machine-checked bound settles which regime the perturbation
lives in. Reading the sub-Gaussian proxy off the realized perturbation
($\sigma^2=2\ln\mathbb{E}_p[e^{\tilde\delta}]$, the tightest the
measured MGF admits), \texttt{served\_tv\_le\_subgaussian} certifies
$\mathrm{TV}\le\sigma/\sqrt2$, evaluating to $0.23$ (median, zero
violations, within $\sqrt2$ of Hellinger --- the price of variance
alone). The decisive ratio $\sigma^2/\mathrm{Var}_p(\delta)$ has
per-request median $0.99$ (range $0.78$--$1.47$): \emph{the realized
perturbation is variance-tight --- sub-Gaussian, not range-dominated
--- despite} $\mathrm{E_{out}}$ reaching $3.1$--$4.3$ on the served
position ($18.9$ at the worst prefill position). The large logits sit
on low-probability tokens, so under $p$ the MGF is governed by the
variance ($\mathrm{Var}_p(\delta)\approx0.115$), not the range ---
``mass movement, not worst coordinate,'' read off the served softmax
itself, falsifying the reading that a draw's vocabulary spread must
behave as a range.

That measurement reshapes the a-priori target rather than closing it.
The a-priori bound --- from the witness budget alone, with no exact
reference run --- has its kernel machine-checked: a $p$-centred,
$\omega$-sub-Gaussian perturbation with proxy $s^2$ is within
$\mathrm{TV}\le s/\sqrt2$ (\texttt{served\_tv\_le\_subgaussian}), a pure
variance bound with no worst-case range. Supplying that proxy from the
witness involves two distinct randomnesses --- the kernel's is a
vocabulary-softmax moment, the martingale's is over rounding draws
$\omega$ --- so the guarantee is inherently one about
$\mathbb{E}_\omega$, which is exactly the epistemics of the witness
e-process. With the vocabulary moment measured variance-tight, what the
proxy needs is a variance-scale bound on
$\mathrm{Var}_p(\delta)=\Delta h^\top\mathrm{Cov}_p(\mathrm{head})\Delta h
\le\mathrm{tr}\,\mathrm{Cov}_p(\mathrm{head})\|\Delta h\|^2$
(\texttt{var\_p\_delta\_le\_trace\_cov}), with $\|\Delta h\|^2$ carried
back by the residual stream: writing $\Delta h=\sum_\ell X_\ell$, the
increments are uncorrelated over $\omega$, so
\begin{equation}\label{eq:varadd}
  \mathbb{E}_\omega\|\Delta h\|^2=\sum_\ell\mathbb{E}_\omega\|X_\ell\|^2
\end{equation}
--- by \eqref{eq:varadd} variances \emph{add}, they do not accumulate as
magnitudes (\texttt{residual\_second\_moment\_le}) --- which is why a witness that
sums per-layer \emph{variances} is the right budget. The
uncorrelatedness is not assumed: modelling the draws as a product measure
discharges it by Fubini
(\texttt{crossSum\_indep\_meanzero\_eq\_zero}), and the per-layer
propagation coefficients are bounded by a Frobenius envelope whose input
norm the score bridge ties back to the witness
(\texttt{linear\_propagation\_frobenius},
\texttt{score\_perturbation\_l2\_le}). The $C_\ell$ here is the variance
\emph{after} propagation to the logits; relating it to the KV-side
witness is the propagation constant we take up next.

We measured those propagation constants on the armed model, and the
measurement returned a negative result worth reporting. Injecting a
random perturbation at each layer's attention output and reading the
served-position hidden state gives $\rho_\ell=\|\Delta
h\|^2/\|\delta\|^2$ with layer median $4.07\times10^{-3}$ (range
$7.2\times10^{-5}$ to $4.9\times10^{-2}$ over all $43$ layers, two probe
directions each, zero-valued noise floor --- repeated identical requests
are bitwise deterministic), which by the Hutchinson identity puts
$\|J^\ell\|_F^2$ near $5.6\times10^3$. But the pre-registered linearity
gate fails: doubling-and-doubling the injection ($\varepsilon$ ratio
$4$, hence energy ratio $16$) leaves $\|\Delta h\|^2$ essentially
unchanged, so $\rho$ drops by a median factor of $14.1$ instead of
staying flat. A response that is independent of the injected energy is
saturated, not linear. Empirically, then, the first-order surrogate
$X_\ell=J^\ell\delta a^\ell$ does \emph{not} hold for this MoE at the
perturbation scales we can inject --- most plausibly because expert
selection is an $\arg\max$ and flips discontinuously. We therefore do
\emph{not} report these $\gamma_\ell$ as sound propagation constants;
they are a diagnostic that falsifies the linearization, and they say the
right repair is not a better Jacobian estimate but a decomposition that
never assumes layer-local dependence in the first place --- a Doob
decomposition $D_\ell=\mathbb{E}[F\mid\mathcal{F}_\ell]-\mathbb{E}
[F\mid\mathcal{F}_{\ell-1}]$, whose conditional MGF bound tolerates the
true nonlinear, adaptive dependence. We take that step in the two
places it is cheap and load-bearing.

First, the Doob form. Replacing the falsified layer-local hypothesis
with a Doob decomposition costs nothing in the constant: with
\begin{equation}
D_\ell\;=\;\mathbb{E}[F\mid\mathcal{F}_\ell]
          -\mathbb{E}[F\mid\mathcal{F}_{\ell-1}] ,
\label{eq:doob}
\end{equation}
and assuming only \emph{bounded differences} --- changing the $\ell$-th
draw moves $F$ by at most $c_\ell$ --- the same conditional-MGF
machinery gives
\begin{equation}
\mathbb{E}\bigl[e^{F}\bigr]\;\le\;
\exp\Bigl(\mathbb{E}[F]+\tfrac{s^2}{2}\Bigr),
\qquad
s^2=\sum_\ell \frac{c_\ell^{2}}{4} ,
\label{eq:doobmgf}
\end{equation}
--- the bias carried explicitly, because
stochastic rounding being unbiased \emph{per layer} does not make the
final logit unbiased once routing flips and activations intervene ---
identical proxy shape and constant, but now $F$ may depend on the whole
draw sequence nonlinearly and adaptively (\texttt{doob\_mgf\_le}). The
modeling assumption, not the bound, is what weakens.

Second, the bridge to a request-level budget. Our served-TV result is an
\emph{expectation} bound for one fixed output; a risk budget needs a tail
bound over a request's stream. The missing composition is short and we
machine-check it: if a per-step loss satisfies $0\le L_t\le1$ with
conditional expectation $\mathbb{E}[L_t\mid\mathcal{F}_{t-1}]\le\mu$, then
convexity of $t\mapsto e^{\lambda t}$ on $[0,1]$ gives the Bernoulli
domination $\mathbb{E}[e^{\lambda L_t}\mid\mathcal{F}_{t-1}]\le
1+\mu(e^\lambda-1)$ (\texttt{bernoulli\_mgf\_le}) --- an
\emph{expectation}-only hypothesis, no range bound needed beyond
$[0,1]$. Taking $\psi=\log(1+\mu(e^\lambda-1))$ makes the e-process
factor have mean at most one
(\texttt{eprocess\_factor\_le\_one\_of\_expected}), which is exactly the
hypothesis our anytime ledger consumes, and composing yields
\begin{equation}\label{eq:beam}
  \Pr\Big(\exists\,t\le T:\ \textstyle\sum_{s\le t}L_s>B_t\Big)\;\le\;\delta
\end{equation}
(\texttt{request\_tail\_of\_expected\_loss}). So the beam \eqref{eq:beam} from expected loss to request-level tail is
in place and machine-checked.
Its left end is proved too, and by a modeling choice rather
than new machinery: take the ledger's event to be one decoding step's
\emph{entire} vector of rounding draws. The one-step conditional
expectation \eqref{eq:predictable} requires is then literally the
quantity our served-TV theorem bounds, evaluated at each history, so
$\mathbb{E}[L_t\mid\mathcal{F}_{t-1}]\le\mu$ holds pointwise
(\texttt{request\_tail\_of\_served\_tv\_massweighted}). The ledger
therefore closes on our theorem, not on a hypothesis. The name matters:
the earlier \texttt{request\_tail\_of\_served\_tv} does carry a
layer-local hypothesis --- each layer's contribution depends on its own
draw --- which our propagation probe falsifies; the mass-weighted form
that Theorem~\ref{thm:main} states carries none, its moment condition
being written on the whole per-step draw vector. What remains open is
instantiating its constants, not repairing its form.

What is left is the model-specific numerics --- the actual propagation
constants $\|J^\ell\|_F^2$ and the first-order linearization
$X_\ell=J^\ell\delta a^\ell$ under which the increments are
$\omega$-local --- not an intractable worst-case wall. Locating this precisely,
machine-checking the entire propagation calculus (orthogonality
discharged, Pythagorean identity, Frobenius envelope, score bridge, and
the sound vocabulary-to-$\omega$ transfer), and measuring that the served
perturbation lives on the tractable side of it, is what we contribute;
instantiating the constants is open --- and the measurement itself is
single-step prefill, not a full multi-step decode.

\section{From the witness to served output, per read}
\label{sec:perread}

\paragraph{A gate-parameterized served-TV law (per read).}
The physical gate admits a KV write only when its certified witness
upper bound satisfies $u_e\le w_{\mathrm{thr}}$, and the audited
realization never exceeds $u_e$, so \emph{every admitted read carries
witness $\le w_{\mathrm{thr}}$ by construction}. Composing this gate
fact with the Cauchy--Schwarz score bridge
$|\Delta\mathrm{score}|\le\mathrm{scale}\cdot\|q\|\cdot\|\Delta k\|$
and a \emph{tightened} softmax bridge yields a closed-form ceiling on
each admitted read's attention TV. The softmax bridge is the crux: the
inherited exponential-tilt form $\mathrm{TV}\le\tfrac12(e^{2\varepsilon}-1)$
is vacuous at the perturbations at hand ($\varepsilon\approx0.70$ gives
$1.54$), but a moment/chord argument gives the machine-checked,
uniformly tighter $\mathrm{TV}\le\tanh\varepsilon$ (numerically the
exact supremum is $\tanh(\varepsilon/2)$, left as a conjecture). With
it the per-read ceiling and its inverse become
\begin{equation}\label{eq:gatelaw}
  \mathrm{TV}\le \tanh\!\bigl(a_q w_{\mathrm{thr}}\bigr),
  \quad
  w_{\mathrm{thr}}^\star=\operatorname{atanh}(\tau^\star)/a_q,
  \quad a_q=\max_{\text{layers}}\mathrm{scale}\cdot\|q\|,
\end{equation}
--- both machine-checked (\texttt{served\_tv\_le\_of\_gate\_tanh},
\texttt{gate\_threshold\_for\_sla}) --- the first time the controller's
knob is tied to a served-output quantity by proof rather than by the
empirical $0.935$. An ungated probe campaign supplies one
self-consistent bridge measurement ($a_q=1.0034$, sampled worst-key
witness $0.70$): reading $0.70$ into the law gives $\tanh(0.70)=0.61$
($0.34$ under the tight conjecture), non-vacuous where the e-form gave
$1.54$ --- but that is a \emph{sampling} instantiation, the very
substitution of a sample maximum for a gate constant this paper
diagnoses elsewhere. The armed deployment's actual threshold is
$w_{\mathrm{thr}}=35.34$ (\texttt{WITCERT\_LEDGER\_WTHR}), where the
same formula returns $1.000$: vacuous.

We first recorded this as ``the certified instantiation is open, pending
an $a_q$ measured on the armed model,'' and that was the wrong diagnosis.
The \emph{sound} $a_q$ needs no run at all. It is
$\mathrm{softmax\_scale}\cdot\|q\|_{\mathrm{static}}$, where
$\|q\|_{\mathrm{static}}=\|\gamma\|_\infty\sqrt{r}\max_h\sigma_{\max}
(W_q^{(b),h})$ holds for \emph{every} input and
$\mathrm{softmax\_scale}=d^{-1/2}$ is a configuration constant, so it is
same-model with the armed gate by construction. Evaluating it on the
armed model gives $a_q^{\mathrm{sound}}=1.5067$ --- only
$1.50\times$ the sampled $1.0034$. The bound was never the problem:
\begin{equation}
\underbrace{w_{\mathrm{thr}}^\star=0.0332}
      _{\text{needed for }\tau^\star=5\%\ (\text{sound})}
\qquad\text{vs.}\qquad
\underbrace{w_{\mathrm{thr}}=35.34}_{\text{deployed}}
\;\;=\;\;1064\times ,
\label{eq:gategap}
\end{equation}
and of that $1064\times$, the query envelope's slack accounts for
$1.50\times$. Tightening $\|q\|$ --- the open item we had named --- cannot
close a gap three orders of magnitude wide.

One bound-side lever remains, and we charge it rather than let the
previous sentence overstate the case. The Cauchy--Schwarz step behind
$a_q$ takes the worst point of a \emph{ball} of radius $W$; replacing
the ball by the measured error ellipsoid is the same move the
companion paper makes for routing, where it bought $3.9$--$18.5\times$,
and its ceiling here is the alignment factor
$\sqrt{d}=\sqrt{512}\approx22.6$. Charged on the armed model's own
captures, it buys nothing:
\begin{equation}
\underbrace{1.50\times}_{\|q\|\text{ bound}}
\;\cdot\;
\underbrace{0.89\times}_{\text{ball}\rightarrow\text{ellipsoid, measured}}
\;\cdot\;
\underbrace{\approx700\times}_{\text{gate operating point}}
\;=\;1064\times .
\label{eq:threelayer}
\end{equation}
The middle term is below one at the median and never usefully above it
($0.39$--$1.13$ across the $43$ layers) --- the ellipsoid bound is
\emph{looser} than the ball it replaces --- with zero violations on the
held-out half,
so this is not an artefact of overfitting the covariance. It is also not
the trivial case of isotropic error: the stable rank of $\operatorname{Cov}
(\Delta k)$ is $37$ of $448$, so the energy really is concentrated. What
defeats the ellipsoid is its radius. The bound is $r\|C^{1/2}q\|$, and
$r$ is an extreme-value statistic in $448$ dimensions --- it must cover
the most outlying row along the \emph{low}-variance directions, where
$C^{-1}$ is largest. Measured, $r\in[25.5,31.1]$, above the isotropic
reference $\sqrt{448}=21.2$: the radius gives back everything the
anisotropy saves. On the routing side the same substitution replaces a
per-expert $\ell_\infty$ ball rather than an $\ell_2$ one and pays off
--- a reminder that a technique's payoff does not transfer across bound
structures, and the reason we measured here rather than extrapolated. With all three
layers now measured, ``the residual is the operating point'' is an
observation rather than a conjecture. So the honest status is sharper than
``instantiation is open'': the law is proved, its sound instantiation is
\emph{in hand}, and it says the deployed gate is nowhere near a per-read
served SLA. Pillar~A meets, in the per-read caliber, the same usability
question that the companion paper settles for pillar~B --- with the
difference that here the certified object is the right one, and only the
operating point is wrong. The design law then reads a served SLA straight off the gate:
certifying $\tau^\star=5\%$ per-read TV needs
$w_{\mathrm{thr}}^\star\le0.05$ in those witness units. The bound is not
merely theoretical: reconstructing exact-versus-compressed attention on
reads captured from the armed model (a \emph{synthetic} pool ---
same-layer rank-0 reads merged without request identity, RoPE
approximated), the machine-checked $\tanh$ bound holds on \emph{every}
sampled read/draw pair --- zero violations, the tight
$\tanh(\varepsilon/2)$ included --- while realized per-read served TV at
the deployed 2-bit setting is small (median $0.03$, worst $0.29$). The
inherited e-form is vacuous ($\ge1$) on $0.07$ of these reads; $\tanh$
never is. Two honest limits remain: $a_q$ here is a sampling maximum
(the sound static envelope above is the backstop), and this measures
the per-read \emph{attention} tier --- the cumulative final-logit
perturbation is measured in \S\ref{sec:quantifier} (served TV median
$0.12$, soundly bounded by $0.16$).

\begin{table}[htbp]\centering\small
\setlength{\tabcolsep}{5pt}
\begin{tabular}{@{}p{0.30\linewidth}p{0.30\linewidth}p{0.32\linewidth}@{}}
\toprule
\textbf{Proved} & \textbf{Measured} & \textbf{Open, declared}\\
\midrule
per-step moment $\to$ request tail, \eqref{eq:step}--\eqref{eq:tail};
Doob form with no layer-local hypothesis;
Bernoulli domination \eqref{eq:bernoulli};
conformal extrapolation \eqref{eq:conformal}
&
$\mathbb{E}_\omega[\mathrm{TV}]\le0.0552$ on deployed rounding;
$16/16$ held-out histories certify;
$133{,}849\times$ cross-history spread;
$+3.8\%$ judgment cost
&
propagation constants (probe \emph{falsifies} the first-order
surrogate); $a_q$ and $w_{\mathrm{thr}}$ from one gated run;
population $=$ our pool, not runtime traffic\\
\bottomrule
\end{tabular}
\caption{What this paper does and does not claim. The middle column is
evidence, not proof; the right column is what a deployed certificate
still needs. Keeping these three apart is the discipline the paper argues
for, and four of our own retractions came from letting them blur.}
\label{tab:status}
\end{table}

\paragraph{Status of the claim.} Table~\ref{tab:status} summarises it.
Established: the machine-checked kernel; the admission mechanism live
in shadow, where the model-validation e-process leaves the radius model
unrefuted over long real streams ($706{,}909$ factors across eight
1{,}024-step decodes; peak $\log M=-3.70$ against a crossing threshold
of $4.61$) --- that, and only that, is what the shadow numbers certify;
the cumulative-loss theorem; the dual-accounting replay; and the
physical-gate armed run (anytime ledger never crossed, budget
dose-response, $+3.8\%$ overhead) with its held-out confirmatory
replication (frozen $\lambda$ and budget, topic-disjoint prompts,
coverage gain same-round $0.14$ vs.\ $0.30$), which promotes the
working point from exploratory to confirmed. Open and declared: pricing
the physical witness account in served-output units. \emph{Per read}
that is now the machine-checked $\tanh$ law; \emph{cumulatively} the
additivity obstacle is dissolved --- the residual stream makes the
output-logit perturbation additive, so the whole request's served TV is
one tanh over a \emph{sum} of per-layer bounds,
\begin{equation}
\mathrm{TV}_{\text{request}}\;\le\;
\tanh\Bigl(\textstyle\sum_\ell b_\ell\Bigr)
\qquad(\texttt{cumulative\_output\_tv}),
\label{eq:cumtv}
\end{equation}
rather than a product of per-layer factors --- which is what made the
aggregate tier reachable at all. What remains open at both tiers is a sound
\emph{supremum} for the query/Jacobian envelope (the $a_q$ and $b_\ell$
factors), today measured rather than bounded --- and, for the Jacobian
factor specifically, our own measurement falsifies the first-order
surrogate it would quantify (\S\ref{sec:instantiate}), which is why the
Doob/bounded-difference form is the route we take --- together with the
graphs-on production fast-path overhead and the MoE expert-weight
action. Until those land, pillar~A is a proved engine with
a validated substrate and a live physical controller --- not yet a
served-output guarantee a user can price.

\section{Consequences for practice}
\label{sec:consequences}
The first instruction has a five-arm replication on the serving stack
itself. Running the RULER-style needle suite over five arms --- an FP8
safe endpoint, a fully aggressive no-gate endpoint, and three gated
configurations sharing one write threshold --- with three independent
server restarts per arm, every arm scores $1.000$ under the
provably-clean caliber (zero evictions and zero ring-page recycling,
both directly counted, which jointly rule out cross-request slot reuse)
and zero authorization violations (Table~\ref{tab:fivearm}). Quality
does not separate the arms; the risk ledger does. The union-budgeted
gate burns $99.4\%$ of its request budget on every twelve-document
batch and pays a $59.3\%$ write-side fallback rate; the
cumulative-loss gate at the same threshold holds a bounded account ---
peak charge $99.97\%$ of budget, with $45$ of $160$ admission calls
budget-forced to exact --- while paying only $56.3\%$: the account
that survives adaptivity also buys $3$ points more coverage at equal
quality. A shadow-audit arm reconciles the packed store against FP8
entry-by-entry ($6.6\%$ mean relative residual), pricing what the
compressed bytes actually hold.

\begin{table}[htbp]\centering\small
\setlength{\tabcolsep}{4pt}
\begin{tabular}{@{}lcclc@{}}
\toprule
Arm & acc & clean & risk ledger & fallback\\
\midrule
FP8 safe (endpoint) & $1.000$ & \ding{51} & none (no compression) & ---\\
aggressive, no gate (endpoint) & $1.000$ & \ding{51} & \textbf{none} & ---\\
gated, union budget & $1.000$ & \ding{51} & exhausts ($99.4\%$ burned) & $59.3\%$\\
gated, cumulative (phys) & $1.000$ & \ding{51} & bounded ($99.97\%$ peak) & $56.3\%$\\
gated + shadow audit & $1.000$ & \ding{51} & as union + reconciliation & $57.5\%$\\
\bottomrule
\end{tabular}
\caption{Five-arm equal-quality table on the serving stack
(DeepSeek-V4-Flash, tp8, three independent restarts per arm, medians;
zero authorization violations anywhere). \textbf{Caliber:} the two
endpoints bound the axis and are not co-axial with the gated arms; the
gate acts on the write-side mantissa-mask layer while the packed
compression itself runs identically in all four compressed arms;
quality is stated only under the provably-clean caliber (zero
evictions and zero ring-page recycling, both directly counted);
fallback is reported as a rate because absolute event counts vary with
batch composition. Throughput is a different caliber (concurrent
clients) and is deliberately not a column of this table --- and we
disclose why: under mixed concurrent workloads (a benchmark warm-up
preceding evaluation) the prototype's c4-tier sparse-read path
exhibits a localized defect (deep-position retrieval degrades while
stored content verifies intact against the reference pool), traced to
unresolved key-space translations and under repair. The quality column
here is measured without such preceding traffic and is unaffected.}
\label{tab:fivearm}
\end{table}

Two instructions survive to the practitioner from this paper's own
scope, each carried by a measurement. First, budget risk with the
anytime-valid account, not the union bound: the union budget exhausts on
every long request while the physically-accounted gate holds its bound
at every admission call and, in the held-out confirmatory round, halves
the exact-fallback rate ($0.30\to0.14$) at matched risk --- coverage is
bought at a price the account states, not found for free. Second, spend
capacity where the workload can convert it: the $3.73\times$ KV
capacity that precision decisions bank converts to $1.19\times$
throughput only under session reuse, and to nothing under one-shot
traffic (Table~\ref{tab:convert}); the decision to buy is a workload
property, not a systems constant. The routing-side prescriptions ---
what not to certify, what not to preserve, and where the freed budget
should go --- are collected with their measurements in the companion
paper's prescriptive table.

\section{Related work}
\label{sec:related}
Four properties would have to hold at once for a precision decision to be
both trustworthy and deployable: it must be made \emph{per request}, it
must carry a \emph{sound} statement rather than an average one, that
statement must survive \emph{adaptive} decoding (where earlier
quantized state feeds later queries), and it must run \emph{inside} a
serving stack. Table~\ref{tab:related} reads the literature against
those four. The pattern is that they never co-occur --- offline work
buys soundness by giving up request conditioning, dynamic-precision
serving buys request conditioning by giving up the guarantee, and the
certified line that has both is confined to the KV domain and budgets by
union bound.

\begin{table}[htbp]\centering\small
\setlength{\tabcolsep}{5pt}
\begin{tabular}{lcccc}
\toprule
& Per- & Sound & Adaptive- & In serving\\
Line of work & request & (not avg.) & valid & stack\\
\midrule
PTQ / second-order weighting
  & \ding{55} & \ding{51} & --- & \ding{55}\\
\quad\emph{GPTQ, AWQ, QERA, YAQA, WaterSIC} & & & &\\
MoE quantization (expert/router-aware)
  & \ding{55} & \ding{55} & --- & \ding{55}\\
\quad\emph{MoEQuant, GEMQ} & & & &\\
Dynamic-precision serving
  & \ding{51} & \ding{55} & \ding{55} & \ding{51}\\
\quad\emph{MorphServe, DP-LLM, QAQ} & & & &\\
Certified runtime quantization (KV)
  & \ding{51} & \ding{51} & \ding{55} & \ding{51}\\
\quad\emph{Calver, WitCert} & & & &\\
\midrule
\textbf{This paper} (served-output budget)
  & \ding{51} & \ding{51} & \ding{51} & \ding{51}\\
\bottomrule
\end{tabular}
\caption{Why the gap is structural rather than incremental. ``Adaptive-valid''
means the guarantee survives the feedback of quantized state into later
queries --- a union budget over a pre-declared event count does not, which
is the deficit \S\ref{sec:av} removes. ``---'' marks lines that make no
per-step claim, so adaptivity does not arise. The routing column is
absent by design: no prior line certifies routing, and
\S\ref{sec:router} is our evidence that certifying it would not have
helped.}
\label{tab:related}
\end{table}

\paragraph{Certified runtime quantization (KV domain).} Runtime
certificates with fallback exist for quantized
attention~\cite{calver2026} and for KV-cache quantization with request
budgets and Lean-checked kernels~\cite{witcert2026};
these establish request-conditioned sound bounds in serving loops. The
former's guarantees are deterministic only under stated preconditions,
are self-described as infeasible for batched serving, and --- by its
own limitations section --- its per-step bounds ``do not compose into
an end-to-end budget''; the latter budgets by union bound. Neither
touches routing, and neither survives adaptive decoding with a valid
request-level budget --- the two deficits this paper addresses.
\paragraph{PTQ with activation/second-order weighting.}
GPTQ/AWQ-style calibration~\cite{gptq2023,awq2024}, closed-form error
reconstruction~\cite{qera2025}, KL-Hessian rounding~\cite{yaqa2026}, and
information-theoretic allocation with provable rate
gaps~\cite{watersic2026} sharpen \emph{offline average} objectives; none
yields a per-instance serve-time certificate, a distinction WaterSIC's
own limitations section is candid about.
\paragraph{Dynamic-precision serving.} MorphServe~\cite{morphserve2025}
swaps quantized layers on load signals; DP-LLM~\cite{dpllm2025} selects
per-step precision by empirical thresholds; query-adaptive
QAQ~\cite{qaqqa2025} uses a trainable router over bit-planes. Request conditioning and serving integration appear ---
but guarantees do not, and the two properties never co-occur.
\paragraph{MoE quantization.} Expert-balanced
calibration~\cite{moequant2025}, global expert-level bit allocation with
router fine-tuning~\cite{gemq2026}, routing-consistency calibration
losses~\cite{vsraq2026,eaquant2025,eacmoe2025}, and
routing-consistency-regularized binarization (MoBiE~\cite{mobie2026},
withdrawn by its authors in April 2026): all offline;
GEMQ's measured $41.31\%$ expert-selection shift at 1.5-bit
demonstrates the fragility, none provides an online guarantee.
Production MXFP4 DeepSeek checkpoints quantize expert projections only
and exclude the router --- consistent with our gate-arm measurement.
The routing-replay and equal-byte comparisons that adjudicate what
these methods protect are the companion paper's subject; here they
enter only through the boundary \S\ref{sec:router} states.
\paragraph{Anytime-valid inference.} E-processes and confidence
sequences are mature for answer-quality monitoring of LLMs; what they
buy over a union budget is Ville's inequality,
$\Pr(\exists t\ge0: W_t\ge1/\delta)\le\delta$ for a nonnegative
supermartingale wealth process with $W_0=1$ --- the quantifier inside
the probability, no horizon declared, survival under adaptive stopping.
To our knowledge no prior work drives \emph{quantization precision
decisions} from such a budget; \S\ref{sec:av} is where we do, and also
where we show that wealth alone is \emph{not} a per-action risk
account.

\section{Limitations}
\label{sec:limits}
Four limits bound what this paper claims, and we state them at the level
that changes a reader's conclusion; the full enumeration, gate by gate,
is Appendix~\ref{app:limits}.
\emph{Choosing the certified predicate is outside the machinery, and
choosing it wrong is invisible from inside.} \S\ref{sec:router} is a
worked instance: the routing certificate issues, its soundness is a
theorem, its violation count is zero on nine models and
$485{,}138$ tokens, and every internal indicator is green --- while the
product is vacuous, because the predicate it establishes does not bound
the quantity anyone cares about (the routing-TV split identity (companion paper)). No amount of
tightening, and no check the framework can run on itself, would have
revealed that; only stepping outside and asking whether $P$ implies the
served effect does. We report this as the most transferable finding in
the paper and as a standing limitation of the approach, not as an
anecdote about one bad predicate.
\emph{The population is ours.} The conformal bound holds on a
$148$-candidate pool we constructed, not on runtime traffic and not along
the history stream an autoregressive request actually walks:
\begin{equation}
\underbrace{\Pr_{\text{pool}}\bigl(S_{\text{new}}>\hat q\bigr)\le0.0933}
      _{\text{proved, }M=148\text{ constructed histories}}
\;\;\not\Longrightarrow\;\;
\underbrace{\Pr_{\text{traffic}}\bigl(S_{\text{new}}>\hat q\bigr)\le0.0933}
      _{\text{what deployment needs}} .
\label{eq:populationgap}
\end{equation}
Closing the gap needs exchangeable sampling from real traffic, not a
sharper inequality.
\emph{The constants are measured, not bounded.} Propagation constants,
the query envelope $a_q$, and the gate threshold $w_{\mathrm{thr}}$ are
each measured on one run or one model; our own probe \emph{falsifies} the
first-order surrogate a tensorized proof would need, which is why
Theorem~\ref{thm:main} is stated for an arbitrary step perturbation.
\emph{The witness is local.} The physical account charges a band-norm of
the K/V residual, not served total variation; the bridge between them is
proved per read and open in the aggregate.

\section{Conclusion}
Certified precision for MoE serving must first certify the right
object --- and getting that wrong costs a misdirected budget, not a loose
bound (\S\ref{sec:consequences}). The companion paper's adjudication
shows it is not routing invariance ---
flips are pervasive and benign --- and a counterexample we aimed at
our own first design shows it is not e-process wealth dressed up as
per-action risk. What remains standing is precise, and we prove it:
quantization damage is a cumulative tail phenomenon of the served
output, and a per-step mass-weighted exponential-moment
budget bounds it --- a per-vocabulary bias-and-fluctuation bound weighted
by served probability, stated in served-logit units, with the map from
the KV-local witness still open. Theorem~\ref{thm:main} carries
that budget to a mass-weighted expected served-output total variation
and then, through a domination that needs only that expectation, to a
request-level tail bound:
\begin{equation}
\underbrace{\mathbb{E}\bigl[\mathrm{TV}_{\text{served}}\bigr]\le B}
      _{\text{mass-weighted expectation}}
\;\Longrightarrow\;
\underbrace{\Pr\bigl(\mathrm{TV}_{\text{served}}\ge\tau\bigr)\le B/\tau}
      _{\text{request-level tail}} .
\label{eq:twolevel}
\end{equation}
The composition is machine-checked, and ---
after our own propagation probe falsified the layer-local idealization
--- it is stated so that no such idealization enters it. The moment
hypothesis it consumes is itself proved from bounded differences alone,
with the bias carried explicitly, and the two are joined by a proved
equality of the conditional-expectation and product-measure forms. What
is left open is not a proof but an instantiation: supplying the
bounded-difference constants for a real network, in served-logit units,
from what the KV-side witness records. That is
the paper's methodological point as much as its technical one --- the
experiment that failed dictated the shape of the theorem that stands. The measured components are cheap (audit witnesses the write path
already produces); the controller's own serving
judgment costs $+3.8\%$ wall time (measured); what is not yet measured is
the graphs-on production fast path. The soundness is
machine-checked where it is probabilistic, and every gate that shaped
the system --- including the negatives that killed our own designs ---
is pre-registered and artifact-traceable.

\appendix
\section{Per-gate experimental record}
\label{app:exp}
\label{sec:exp}
All gates were pre-registered: verdict branches were committed to the
repository before the corresponding data existed, and every number
below is frozen in a machine-generated canon that traces to a run
artifact. Captures carry provenance stamps (machine, code hash, tree
digest) and declare their accounting keys; a conformance guard rejects
products whose declared keys are degenerate (a failure mode we hit and
formalized during this campaign).

\paragraph{Activation geometry (gate W2-a).}
Table~\ref{tab:w2a} summarizes per-stream spectra. Both models pass the
pre-registered concentration bar on attention streams; the MoE passes
on all three. Held-out energy is computed by fitting the top-$7.1\%$
eigenspace on even-parity requests and evaluating on odd-parity
requests (flat-spectrum baseline $0.071$).

\paragraph{KV-side honesty (gates K0, K0$'$, K0-R).}
The pre-registered kill-shots that redirected this paper: on synthetic
pools the certified-vacuous stratum was empty (43/43 layers
non-vacuous even at 4-bit masking) and no condemned read was rescuable
(rank correlation $0.872$ between realized distribution shift and
realized output error); the real-pool recheck is a
\emph{stronger negative} rather than a confirmation: across $189$
same-request pools (depth $\geq 64$, request-matched queries, RoPE
segment included) the INT6/4/3 family never produces a distribution
shift above the condemnation bar, so the condemnation stratum is empty
($0$ points) and that run's own pre-registered branch is
``inconclusive, no high shift'' --- there is nothing to rescue in the
deployed regime because nothing gets condemned. The shift-to-error rank
correlation there is $0.935$. The two pools fail the rescue hypothesis
in \emph{different} ways, which is why the second is the stronger
result:
\begin{equation}
\underbrace{1{,}528\ \text{condemned},\ 0\ \text{rescued}}
      _{\text{synthetic pool: rescue exists to test, and fails}}
\qquad
\underbrace{0\ \text{condemned}}
      _{\text{real pool ($189$): nothing to rescue}} .
\label{eq:twopools}
\end{equation}
The query-side spectrum pilot
($\rho_{90}=0.094$ in-sample against $0.538$ held-out energy at
$n\ll d$, verdict \emph{gray}) is retained as the small-sample
cautionary control that shaped the W2-a design.

\begin{table}[htbp]\centering\small
\begin{tabular}{lrrr}
\toprule
GLM-5.2, identical hardware & FP8 & W4AFP8 & ratio\\
\midrule
Aggregate throughput, $C{=}128$ (tok/s) & $1{,}789$ & $1{,}834$ & $1.03\times$\\
Aggregate throughput, $C{=}512$ (tok/s) & $1{,}827$ & $1{,}869$ & $1.02\times$\\
KV-cache capacity (resident tokens)     & $291{,}968$ & $1{,}087{,}808$ & $3.7\times$\\
Errors, full $1..128$ sweep             & $0$ & $0$ & ---\\
\bottomrule
\end{tabular}
\caption{Throughput is flat to within single-run spread at both
concurrencies --- we report the spread rather than claim equivalence,
since no restart confidence intervals were collected --- while
configured resident-token headroom grows $3.7\times$. On this
short-context workload neither arm reaches its KV wall.
Table~\ref{tab:convert} pressure-probes the headroom and reports what it
does and does not convert to. The capacity figure here is
configuration-derived, not an OOM limit.}
\label{tab:capacity}
\end{table}

\begin{table}[htbp]\centering\small
\setlength{\tabcolsep}{5pt}
\begin{tabular}{lcc}
\toprule
& no prefix reuse & session reuse\\
$26$k-token requests, GLM-5.2, W4AFP8 vs FP8 & (worst case) & (agent-shaped)\\
\midrule
Configured KV capacity            & \multicolumn{2}{c}{$3.73\times$}\\
$\rightarrow$ concurrent requests & $2.0\times$ & $2.24\times$\\
$\rightarrow$ warm-prefix hits    & --- (none)  & $1.93\times$\\
$\rightarrow$ \textbf{throughput} & $\mathbf{0.96}$--$\mathbf{1.01\times}$
                                  & $\mathbf{1.07}$--$\mathbf{1.27\times}$\\
$\rightarrow$ TTFT (p50)          & --- & $1.08$--$1.20\times$ ($0.92\times$ at $C{=}256$)\\
\bottomrule
\end{tabular}
\caption{What $3.7\times$ of KV capacity converts to, pressure-probed in
two workloads. \textbf{Left}: every request is a fresh slice of text, so
the prefix cache never hits (peak cached tokens $2{,}560$ and $0$ against
a $26{,}000$-token prompt). FP8 runs at $100\%$ pool occupancy and
W4AFP8 at $58\%$, so the capacity does buy $2\times$ the in-flight
requests --- and buys \emph{nothing} in throughput, because at this
prompt length the work is prefill-bound and the GPU is already
saturated. \textbf{Right}: each session keeps its own long context and
reuses it across turns, the shape a coding agent has. Now capacity
decides \emph{how many sessions stay cache-warm}: FP8 is pinned at
$10$--$11$ resident sessions per rank (its pool holds
$291{,}968/26{,}500=11.0$) at $100\%$ occupancy, while W4AFP8 holds
$23$--$24$ at $54\%$. Every link in the chain attenuates, and the last
one attenuates hardest --- what a hit saves is prefill, while steady-state
throughput is still set by decode and compute. Two caveats we state
rather than bury: the TTFT gain \emph{reverses} at $C{=}256$, where
W4AFP8's larger running batch leaves each request less compute, so the
throughput is bought with per-request latency; and each session completes
only $\approx1.9$ turns inside the measurement window, so $1.19\times$ is
a lower bound on what a real multi-turn session would show --- we do not
extrapolate it. Protocol: per-level prefix namespaces and a $90$\,s
warm-up per level; without both, prefixes carry across levels and the
measurement reports a spurious $1.58\times$.}
\label{tab:convert}
\end{table}

\paragraph{Capacity--cost evaluation (gate W3, measured).}
On GLM-5.2 served with the production stack (tp8/ep8/dp8, FP8 KV
cache, CUDA graphs, radix on; both arms at matched memory fraction
$0.85$ after a pre-registered arm death at $0.88$ taught us the W4AFP8
runtime workspace requirement), the W4AFP8 checkpoint is
throughput-neutral at matched concurrency --- $1{,}834$ vs $1{,}789$
aggregate tok/s at $C{=}128$, with TTFT/TPOT curves overlapping within
single-run spread and zero errors across the full $1..128$ sweep on both
arms (we report the spread rather than claim statistical equivalence,
since no restart confidence intervals were collected) ---
while its server-reported KV-cache capacity is $1{,}087{,}808$ vs
$291{,}968$ tokens (Table~\ref{tab:capacity}): $3.7\times$ the
configured resident-token headroom on identical hardware (a configuration-derived figure, not a
pressure-probed OOM limit; throughput parity is from single runs
without restart confidence intervals). An extended sweep to $C{=}512$
sharpens the interpretation: both arms hold a compute-saturation
plateau ($1{,}869$ vs $1{,}827$ tok/s at $C{=}512$) with zero errors
and matching queueing growth --- on this short-context workload neither arm reaches its KV
wall, so the capacity headroom is banked, not spent; it becomes
serving value exactly where resident tokens bind (long contexts, large
batch memory), which is the production regime the deployment reports
describe. Every level of both arms passes a machine adjudicator
(per-level liveness, matched level sets, capacity fields on record)
whose falsifiability is enforced by a real negative fixture: the run
in which one arm died mid-sweep and six dead windows masqueraded as
measurements.

\section{Request-conditioned activation geometry (precondition test)}
\label{app:geometry}
Both pillars consume a bound on how quantization error in a weight $W$
projects onto the activations $x$ the current request actually
produces: for error $E=W-\widehat W$, the served perturbation is $Ex$,
and $\mathbb E\|Ex\|^2 = \operatorname{tr}(E\,\Sigma_x\,E^\top)$ for
$\Sigma_x=\mathbb E[xx^\top]$. The decision-relevant question is
whether $\Sigma_x$ has usable low-dimensional structure, and whether
the structure measured on one phase of a request predicts the other.

\paragraph{Measurement.} We accumulate exact second moments
$\Sigma=X^\top X$ per (layer, stream, phase, request-parity) on the
three activation streams every transformer block exposes (attention
input, attention-output projection input, MLP/MoE input), on 16
teacher-forced requests (16 requests, 2048/4096-token prefills plus 128 forced decode steps; $3\times 10^4$-row order per stream, $n\gg d$). Split-half
(request-parity) held-out energy and prefill$\to$decode transfer are
controls against the small-sample bias that a pilot study on the KV
query side taught us to fear: at $n=96\ll d=448$ the in-sample
spectrum looked concentrated ($\rho_{90}=0.094$ of eigendirections) while
held-out energy at the matched rank reached only $0.538$ --- above the
flat-spectrum baseline but below the pre-registered $0.70$ bar, a
\emph{gray} verdict rather than a collapse. At $n\gg d$ the gap closes.

\begin{table}[h]\centering\small
\begin{tabular}{llccc}
\toprule
Model & Stream & $\rho_{90}$ & Held-out & Verdict\\
\midrule
Qwen2.5-7B & attn-in & $0.034$ & $0.829$ & concentrated\\
Qwen2.5-7B & o-proj-in & $0.0332$ & $0.742$ & concentrated\\
Qwen2.5-7B & mlp-in & $0.0444$ & $0.660$ & gray\\
DeepSeek-V2-Lite & attn-in & $0.0044$ & $0.946$ & concentrated\\
DeepSeek-V2-Lite & o-proj-in & $0.0264$ & $0.844$ & concentrated\\
DeepSeek-V2-Lite & moe-in & $0.0444$ & $0.792$ & concentrated\\
\bottomrule
\end{tabular}
\caption{Gate W2-a: activation second-moment spectra with held-out
validation. $\rho_{90}$ = fraction of eigendirections carrying $90\%$
of energy (prefill, per-layer median).}
\label{tab:w2a}
\end{table}

\paragraph{Findings (pre-registered thresholds).} On DeepSeek-V2-Lite
all three streams pass the concentrated verdict: the top
$0.44$--$4.4\%$ ($\rho_{90}$ per stream: $0.0044$, $0.0264$, $0.0444$) of eigendirections carry $90\%$ of energy,
and a subspace fitted on even-parity requests captures
$0.946$/$0.844$/$0.792$ of odd-parity energy at rank $7.1\%$ of
$d$ (flat-spectrum baseline: $7.1\%$). On Qwen2.5-7B the attention
streams pass ($\rho_{90}$ $0.034$/$0.0332$, held-out $0.829$/$0.742$); the MLP stream falls
in a pre-registered gray zone (held-out $0.660$ against a $0.70$ bar, $\rho_{90}$ $0.0444$). Crucially, coordinate-basis (diagonal) concentration is far
weaker (an order of magnitude more coordinates for the same energy): the structure is off-diagonal, so
per-channel scaling --- the diagonal special case --- cannot exploit
it. This replicates, on the weight side, a negative prior we measured
for diagonal query-weighting in KV quantization.

\begin{table}[htbp]\centering\small
\setlength{\tabcolsep}{3pt}\footnotesize
\begin{tabular}{@{}llll@{}}
\toprule
Arm & Hypothesis under test & Verdict & Key number\\
\midrule
Gate quantization & router survives INT4 & \textbf{killed} & $63.2\%$ flips\\
Upstream margins & exact gate suffices & \textbf{killed} & $93.7\%$ binding\\
Flip$\to$damage & flips predict damage & \textbf{dissociated} & RR $0.71$--$0.84$\\
Natural text & damage is average-case & \textbf{killed} & $\Delta$NLL $0.0002$\\
Value-transport & condemned reads rescuable & \textbf{killed} & $0$/$1{,}528$\\
\midrule
Served-output tail & the object that survives & \textbf{adopted} & TV $0.118$/bd.\ $0.159$\\
Propagation probe & first-order surrogate holds & \textbf{killed} & linearity $14.1$\\
Bounded differences & budget is non-vacuous & \textbf{model-dep.} & $0.0489$ (diagnostic)\\
Layer-local plug-in & product of per-layer moments & \textbf{model-dep.} & $0.0390$ (diagnostic)\\
Joint scalar moment & the functional the theorem needs & \textbf{supported} & pt.\ est.\ $0.0023$\\
Betting confidence seq. & bound free of sample extrema & \textbf{supported} & $\mathbb{E}[\mathrm{TV}]\le0.0322$\\
Deployed rounding & the surrogate is removable & \textbf{supported} & $\mathbb{E}[\mathrm{TV}]\le0.0552$\\
Cross-history spread & one history represents all & \textbf{killed} & $133{,}849\times$\\
Cross-history certif. & calibrated bound transfers & \textbf{supported} & $16/16$, $\le0.0254$\\
Lipschitz envelope & sound $c_\ell$ from weights & \textbf{killed} & $10^{221}$\\
Local envelope & contractive as a supremum & \textbf{killed} & $\sigma_{\max}\approx16$\\
\bottomrule
\end{tabular}
\caption{Every design hypothesis we tested, including the ones aimed at
our own proposals. The companion paper's five pre-registered arms
eliminate routing invariance as the certified object; the served-output
tail is what survives, and is what Theorem~\ref{thm:main} bounds. The last row is the measurement that
falsified the first-order propagation surrogate and thereby fixed the
form of the theorem. Numbers are the same artifacts cited in the text.}
\label{tab:adjudication}
\end{table}

\section{Proof sketch of Theorem 1 (machine-checked chain)}
\label{app:proof}

\noindent\emph{Proof (machine-checked). } Figure~\ref{fig:chain} shows
where this theorem sits in the chain and what it does not reach. The
statement above is
\texttt{request\_tail\_of\_served\_tv\_massweighted}; its Doob-supplied
form, taking bounded differences directly as input, is the single root
theorem \texttt{request\_tail\_of\_served\_tv\_doob\_massweighted}. Per draw, the
sub-Gaussian kernel gives $\mathrm{TV}\le\sqrt{\log Z(\omega)}$ via
$\mathrm{BC}\ge e^{-\sigma^2/4}$ and $\mathrm{TV}\le\sqrt{1-\mathrm{BC}^2}$;
concave Jensen for $\sqrt{\cdot}$ then $\log$, and a Fubini step giving
$\mathbb{E}_\omega Z=\sum_v p_h(v)\mathbb{E}_\omega[e^{\delta_h(\cdot,v)}]$,
yield the mass-weighted step bound --- note the weighting by $p_h(v)$ is
what keeps a large perturbation on a low-probability token from
dominating, and it is why the bias $b_h(v)$ can be priced rather than
assumed away. Convexity of $t\mapsto e^{\lambda t}$ on $[0,1]$ then gives the
Bernoulli domination
\begin{equation}\label{eq:bernoulliapp}
  \mathbb{E}\big[e^{\lambda\mathrm{TV}}\big]\;\le\;1+\mu\big(e^\lambda-1\big),
\end{equation}
whose mean-one e-process factor --- the right-hand side of
\eqref{eq:bernoulliapp} --- is exactly what the anytime ledger consumes; taking one decoding step's
whole draw vector as the ledger's event makes hypothesis (ii) a
\emph{conditional} statement at every history, which is what the ledger
needs. Axioms are the three standard ones, no \texttt{sorry}.

\section{The step budget is of the right order: caliber}
\label{app:rightorder}

\emph{Is the budget of the right order?} The theorem is only useful if
the quantity it consumes, $\sqrt{\log\sum_v p_h(v)B_h(v)}$, is far below
$1$ at a real operating point. We can bound the \emph{shape} of that
quantity without a new experiment, and we have to be exact about what
that does and does not show. Substituting one \emph{realized} draw
$\omega_0$ for $B_h(v)$ --- i.e.\ computing
$\sqrt{\log\sum_v p_h(v)e^{\delta_h(\omega_0,v)}}$ --- gives $0.230$
(median over the $8$ captured requests, $0.397$ at the maximum) against a
realized served TV of $0.118$. This is a \emph{realization} of the
object, not a bound on it: hypothesis~(ii) asks for
$\mathbb{E}_\omega[e^{\delta}]$, and a moment generating function is
precisely where a typical draw and its expectation part company --- a
rare large $\delta$ contributes nothing to $\omega_0$ and everything to
$\mathbb{E}_\omega$. So the honest reading is that the object the
theorem prices is of the right order where the system runs, and that
three things stand between that and a certificate: obtaining $B_h(v)$
\emph{a priori} from the witness rather than from a reference run;
replacing the realized draw by a bound on its $\omega$-expectation; and
replacing a median over eight requests by a bound at \emph{every}
history, which hypothesis~(iii) quantifies over. The next section's
propagation measurement attacks the first; \S\ref{sec:exp}
measures the third and finds it is where the real distance lies.

\section{The binomial route and its risk account}
\label{app:cprisk}

Counting those outcomes gives a Clopper--Pearson bound of $0.2501$ on the
rate at which a further history \emph{fails to certify}. That is a bound
on the certification procedure rather than on the truth, and it is the
weakest instrument available for the job; the next paragraph replaces it.
A complete risk account reads
\begin{equation}\label{eq:risk}
  \delta_{\mathrm{total}}\;=\;\delta_{\mathrm{ledger}}
  +\alpha_{\mathrm{draw}}^{\mathrm{total}}
  +p_{\mathrm{history}}^{\mathrm{UCB}}
  +\beta_{\mathrm{history}},
\end{equation}
the last term pricing the confidence in the history-failure-rate estimate
itself. Reaching a $1\%$
history-failure rate at $99\%$ confidence with zero violations needs on
the order of $459$ histories by this route; a hundred would reach roughly
$4.5\%$.

\section{Adjudication notes for the dual-accounted admission table}
\label{app:ledgernotes}
Three calibers govern how Table~\ref{tab:ledger} may be read.
\emph{Cross-run comparison.} A separate offline dual-accounting run over
$36$ request accounts puts the median \emph{upper bound} on the union
fallback rate at $31.0\%$; different run, different population, and
neither bounds the other. The coverage gain is also not like-for-like:
the guaranteed object changes from a per-event family to a cumulative
account, which the run's own caliber note states.
\emph{Physical-gate non-vacuity.} The physically-accounted gate
($\texttt{WITCERT\_CUMLOSS\_MODE}$$=$\texttt{phys}) stays non-vacuous ---
the tail term is $0.67\%$ of the accumulated mean term, not of the
budget --- with the certified bound sitting $4.6\%$ above the realized
witness at the request endpoint (median over $16$ endpoints).
\emph{Judgment cost.} The cost is not monotone in the budget: the tight
arm runs $1.5\%$ \emph{faster} than the union baseline while the open
arm runs $3.8\%$ slower, so the $+3.8\%$ quoted elsewhere is that arm's
figure, not a property of the mechanism.

\section{The history cost of the certificate}
\label{app:histcost}

Continuing from the marginal/conditional distinction of \S\ref{sec:quantifier}: What the order statistic changes is the
cost of eventually getting there. Both terms must be
priced together, since $1/(N+1)+\alpha\ge\alpha$. Splitting a $1\%$
\emph{history} sub-budget evenly gives $\alpha=0.005$ and $N=199$
histories \emph{at the marginal caliber}; splitting it unevenly is
cheaper still ($\alpha=0.001$ needs $N=111$). A training-conditional
statement at the same target needs more, and how much more depends on
the population size. We stress that this is the history term of \eqref{eq:risk}
alone. A request-level budget must satisfy
\begin{equation}\label{eq:budget}
  \delta_{\mathrm{ledger}}+\alpha_{\mathrm{conf}}+\tfrac{1}{N+1}
  \;\le\;\delta_{\mathrm{req}},
\end{equation}
so by \eqref{eq:budget} the sample size follows from
$\delta_{\mathrm{req}}$ only after $\delta_{\mathrm{ledger}}$ is frozen;
at our present configuration the two accounts together are
$1\%+2.54\%$, not $1\%$. Comparing
against zero-event Clopper--Pearson at the same total is
budget-dependent in the same way: $459$ histories if its $1\%$ is
charged entirely to the bound value, $1{,}057$ if its confidence is
charged separately as we charge ours. We quote the conservative
end --- $2.3\times$ fewer histories --- and note the honest range is
$2.3$ to $5.3\times$. Figure~\ref{fig:histcost} traces the requirement
across targets.

\begin{figure}[htbp]\centering
\includegraphics[width=0.58\linewidth]{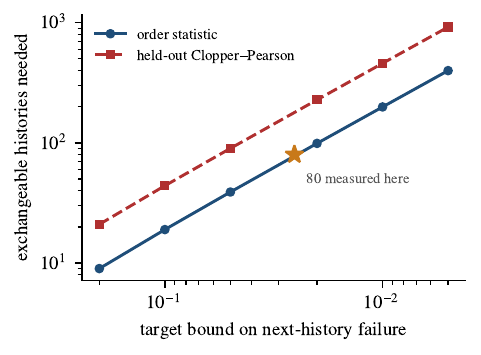}
\caption{What the instrument costs in histories, at the \emph{marginal}
caliber and in the large-population limit. Number of exchangeably sampled
histories needed to certify a given bound on next-history failure,
splitting the budget evenly between the two terms; the Clopper--Pearson
curve is the zero-event requirement at $99\%$ confidence. A
training-conditional guarantee costs more, and a finite pool costs more
still --- our own $80$-history run reaches $0.0254$ marginally but
$0.0933$ conditionally. The two follow different scaling laws, so the gap widens as
the target tightens --- which is the long-run argument for the order
statistic, independent of the factor we measure at $80$ histories
($\star$).}
\label{fig:histcost}
\end{figure}

\section{Full limitations, gate by gate}
\label{app:limits}
(i) The routing arms run on DeepSeek-V2-Lite, a $64$-expert model with
greedy top-$k$ selection. We have since re-measured on the serving-line
model itself --- DeepSeek-V4-Flash, $256$ experts top-$6$ under
bias-corrected routing --- and the verdict is unchanged and stronger
($99.0\%$ binding at INT4 against $93.7\%$ on the proxy, zero soundness
violations). That gap is now closed the rest of the way: the
nine-model sweep of \S\ref{sec:router} includes GLM-5.2 ($99.9\%$),
MiniMax-M2.7, Qwen3, gpt-oss, Kimi-K2.5, Mixtral and Llama-4-Scout, so
the residual exposure is no longer \emph{which family} but \emph{which
selection rule}: a two-stage (grouped) rule must charge both stages, and
only the second is the familiar one,
\begin{equation}
\underbrace{m_{\mathrm{group}}>4L\varepsilon_\infty}_{\text{group stage
(a sum of two perturbed scores)}}
\quad\wedge\quad
\underbrace{m_{\mathrm{expert}}>2L\varepsilon_\infty}_{\text{within-group
stage}} .
\label{eq:groupedcert}
\end{equation}
Rules we have not measured --- learned or hashed assignment, or
$>2$-stage selection --- are not covered by
(\ref{eq:groupedcert}) and remain open. (ii) The upstream logit-error radius used online must itself be
certified from activation geometry (Appendix~\ref{app:geometry}); the
teacher-forced measurement in \S\ref{sec:router} bounds achievable
behavior, not the deployed bound's tightness. (iii) Our first
router-margin arm conflated perturbation sources (gate vs.\ upstream);
we report both arms and the correction openly. (iv) The value-transport
rescue hypothesis for KV quantization died its pre-registered death on
synthetic pools (rank correlation between attention-distribution
shift and realized output error $0.872$; zero rescuable reads among
1,528 condemned). The real-pool recheck is not a confirmation of the
death but a stronger negative: under the production quantizer family no
read even reaches the condemnation bar, so the condemnation stratum is
empty ($0$ points) and that run's own pre-registered
verdict branch is ``inconclusive, no high shift''; the rank correlation there is
$0.935$. This paper claims nothing about
output-visible rescue. (v) Cross-layer propagation of weight-error
into router inputs is measured, not bounded; hidden-state bridges
remain empirical tier. (vi) The armed controller's current action is
KV-write precision; expert-weight precision switching is motivated and
priced here but not implemented. (vii) The witness-to-served bridge is
closed as a \emph{theorem} (Theorem~\ref{thm:main}) but not as a
\emph{deployment}: its per-layer budget $C_\ell$ is stated in
served-logit units, whereas the runtime witness accounts in KV-local
units, and the constant relating them is exactly what our propagation
probe failed to certify (the response saturates, so the first-order
surrogate that would supply it is invalid). Until that constant is
bounded --- not estimated --- the theorem prices a budget the deployed
ledger does not yet emit. (viii) The bounded-difference constants
$c_\ell$ are now \emph{measured} directly, and the budget they produce is
non-vacuous ($0.0489$, with the bias measured rather than
assumed), but they are empirical maxima --- lower bounds on the supremum
a sound certificate needs. Closing this means a genuine envelope for the
compressed path; the composition rule is proved
(\texttt{bdd\_diff\_of\_lipschitz}), but instantiating it from the
deployed weights yields $10^{221}$ and rules a \emph{global} envelope
out. The local route is formalized with its extra obligation explicit
(\texttt{bdd\_diff\_of\_lipschitz\_on}, \texttt{lip\_iterComp\_on}); the
ladder it buys is large and still not enough:
\begin{equation}
\underbrace{10^{221}}_{\text{global envelope}}
\;\longrightarrow\;
\underbrace{\sigma_{\max}\approx16.0}_{\text{local, along the trajectory}}
\;\longrightarrow\;
\underbrace{\text{effective rank }42}_{\text{what actually settles it}} ,
\label{eq:lipladder}
\end{equation}
a hundred-plus orders of magnitude for the first step, yet
$\sigma_{\max}>1$ is expansive, so a Hoeffding bound is still vacuous. Our own reading
of the typical ratio ($0.0638$) as a Lipschitz constant was wrong and we
retract it. The repair the rank indicates is a Bernstein-type moment
bound calibrated to it rather than a Hoeffding bound calibrated to a
supremum. Deriving and machine-checking that bound is the concrete next
step, together with verifying forward invariance online. (ix) Our per-layer plug-in numbers ($0.0390$,
$0.0489$) are diagnostics under the layer-local additive model, not
instantiated budgets: multiplying single-layer moments assumes exactly
the additivity the saturation measurement rejects. One interface
mismatch remains in the same direction --- the measured quantity is a
maximum against a single baseline, not a diameter over pairs of draws
(the other, a vocabulary-shared constant, we removed by indexing per
token in the theorem). The experiment that removes both at once, a joint
resample of every layer reduced to one scalar per draw, we report. An empirical Bernstein bound gives
$0.2086$, but it takes its range $R$ from the \emph{sample} maximum
rather than a deterministic bound, so it is a diagnostic, not sound. It mirrors the Bernstein row of
Table~\ref{tab:ladder}: there the range was too \emph{large} and the
bound went vacuous; here it is too small and the bound stops being
valid. The gap is closed properly instead:
total variation is deterministically bounded in $[0,1]$, so a
betting-style confidence sequence certifies
$\mathbb{E}_\omega[\mathrm{TV}]\le0.0322$ with no appeal to a sample
extremum. The proxy then goes too: repeating an identical prompt draws
fresh deployed rounding each time, and on those real draws --- compression
verified to occur, every realized total variation distinct --- the same
sequence gives $\mathbb{E}_\omega[\mathrm{TV}]\le0.0552$:
\begin{equation}
\underbrace{\mathbb{E}_\omega[\mathrm{TV}]\le0.0322}
      _{\text{synthetic draws, no sample extremum}}
\qquad
\underbrace{\mathbb{E}_\omega[\mathrm{TV}]\le0.0552}
      _{\text{deployed rounding, }160\text{ distinct real draws}} .
\label{eq:csboth}
\end{equation}
What separates this paper from a deployed
certificate is therefore neither the surrogate nor the inequality but
the \emph{quantifier}: the theorem asks its hypothesis at every history
a request visits. We establish it at $80$ histories drawn
exchangeably from a stated pool spanning $133{,}849\times$ in
sensitivity, and certify transfer to an unseen member with failure
probability $\le0.0254$; all $16$ held-out histories certify. \enlargethispage{6\baselineskip}%
Two distinctions carried that result, and neither is pedantry.
Reading ``did not certify'' as ``violated'' is the same error, one level
up, as reading a sample mean as a bound --- we made both before catching
them. And an unmet exchangeability premise turns a bound into an
efficiency ceiling, not a risk account, which is why we rebuilt the
history family as a without-replacement sample. The population is still
ours, not runtime traffic; risk is reported as in \eqref{eq:risk}.

\bibliographystyle{plain}
\bibliography{refs}

\end{document}